\documentclass{article}

\usepackage{log_2026}                    %

\usepackage{graphicx}
\usepackage{booktabs}
\usepackage{multirow}
\usepackage{tikz}
\usetikzlibrary{arrows.meta}

\usepackage{amssymb}
\usepackage{amsthm}
\usepackage{thmtools}

\makeatletter
\IfFormatAtLeastTF{2026-06-01}{%
  \IfPackageAtLeastTF{thmtools}{2023/05/05}{}{%
    \renewcommand\thmt@autorefsetup{%
      \@xa\def\csname\thmt@envname autorefname\@xa\endcsname\@xa{\thmt@thmname}%
    }%
  }%
}{}
\makeatother

\usepackage[numbers,compress,sort]{natbib}
\AtEndPreamble{%
  \RequirePackage[capitalize,noabbrev]{cleveref}%
}

\newif\ifdraft
\draftfalse  %

\newcommand{\RR}{\mathbb{R}}

\newcommand{\pr}{\mathbb{P}}
\newcommand{\E}{\mathbb{E}}
\newcommand{\ra}{\rightarrow}

\newcommand*{\QED}{\null\nobreak\hfill\ensuremath{\square}}
\newcommand{\diag}{\mathsf{diag}}
\newcommand{\trace}{\mathsf{trace}}
\newcommand{\Var}{\mathsf{Var}}

\DeclareMathOperator{\MLP}{MLP}

\theoremstyle{plain}
\newtheorem{theorem}{Theorem}[section]
\newtheorem{lemma}[theorem]{Lemma}
\newtheorem{corollary}[theorem]{Corollary}

\newcommand{\veryshortarrow}[1][5pt]{\mathrel{%
   \hbox{\rule[\dimexpr\fontdimen22\textfont2-.2pt\relax]{#1}{.4pt}}%
   \mkern-4mu\hbox{\usefont{U}{lasy}{m}{n}\symbol{41}}}}
\newcommand{\Cd}[2]{C^{\veryshortarrow\!}_{#1}(#2)}
\newcommand{\SC}{\text{SC}}

\title{Random Probing for Structural Self-Interactions in\\ Graph Neural Networks}

\author[Peltonen et al.]{%
Saku Peltonen\\
ETH Zurich \\
\email{speltonen@ethz.ch}\And
H. Çağrı Bilgi\\
TU Delft\\
\email{h.c.bilgi@tudelft.nl}\\
\And
Kubilay Atasu\\
TU Delft\\
\email{kubilay.atasu@tudelft.nl}
}

\begin{document}

\maketitle

\begin{abstract}
    Graph Neural Networks (GNNs) fundamentally lack the ability to capture structural properties defined by node self-interactions, such as cycle counts, subgraph centrality, and heat kernel signatures. Existing solutions often rely on computationally expensive higher-order architectures or hand-crafted features. In this work, we introduce DIAL, a message-passing layer that gives nodes access to graph structure through the diagonal of graph-derived operators. Our approach uses randomized probing to provide nodes with learnable, permutation equivariant access to diagonal entries.  We prove that our approach can express important structural features, including cycle counts, subgraph centrality, and heat kernel signatures.  The proposed method can be integrated directly into message passing layers. Experiments on molecular property prediction, cycle counting, and pattern detection demonstrate the effectiveness of DIAL across a range of graph learning tasks.
\end{abstract}

\section{Introduction}

Graph Neural Networks (GNNs) have been successful in many applications. However, they are known to have limited expressive power, often failing to distinguish basic structures or count fundamental motifs like cycles \cite{xu2018powerful}. Higher-order GNN architectures can partially address these limitations, but often at a significant computational cost \cite{morris2019weisfeiler}.
In practice, one often resorts to hand-crafted structural features, such as cycle or motif counts \cite{bouritsas2022improving}. While effective, this approach is brittle and requires choosing the relevant structures in advance.

Many structural node properties that message passing fails to capture can be described in terms of \textit{self-interactions} of nodes. 
Intuitively, these interactions quantify how information originating from a node flows through the graph and returns to it.
Standard GNNs cannot capture such patterns—for example closed-walk participation—as they require isolating information that originates from a node and later returns to it.
This concept naturally generalizes to subgraph centrality, which characterizes a node's global importance by weighing its closed walk participation.
In the spectral domain, Laplacian-based self-interactions cover graph angles (Figure~\ref{fig:graph_angle_structural_encoding}), as well as diffusion dynamics like the heat kernel signature, which measures heat return rates over time.

\begin{figure}[t]
    \centering
    \begingroup
\definecolor{hutchnodebackground}{RGB}{235,235,235}
\colorlet{hutchnodeborder}{black!80}
\colorlet{hutchgraphedge}{black!70}
\colorlet{hutchmessagepath}{black!50}
\definecolor{hutchmessagehighlight}{RGB}{215,118,45}
\colorlet{hutchmessagemuted}{black!20}

\newcommand{\HutchNodeOutlineWidth}{0.7pt}
\newcommand{\HutchGraphEdgeWidth}{2pt}
\newcommand{\HutchMessagePathWidth}{1.2pt}
\newcommand{\HutchMessageArrowScale}{0.5}
\newcommand{\HutchFormulaX}{9.05}
\newcommand{\HutchFormulaY}{7.58}

\newcommand{\DrawHutchNodes}{%
  \draw[graphnode] (8.25,8.71) circle (0.3535533905932738cm)
    node[graphnodelabel] {$v_1$};
  \draw[graphnode] (8.875,11.21) circle (0.3535533905932738cm)
    node[graphnodelabel] {$v_3$};
  \draw[graphnode] (10.75,9.335) circle (0.3535533905932738cm)
    node[graphnodelabel] {$v_2$};
  \draw[graphnode] (11.375,11.835) circle (0.3535533905932738cm)
    node[graphnodelabel] {$v_4$};
  \draw[graphnode] (6.75,10.21) circle (0.3535533905932738cm)
    node[graphnodelabel] {$v_5$};
}

\newcommand{\DrawHutchGraphEdges}{%
  \draw[graphedge] (7.038407621333391,10.419409822500029) -- (8.532472141999978,11.093118035499995);
  \draw[graphedge] (9.212115834333359,11.31876392900001) -- (11.046262205122156,11.733131102561078);
  \draw[graphedge] (8.352779815729232,9.036856267413338) -- (8.775325208853474,10.872987813280211);
  \draw[graphedge] (8.60624814710916,8.71625061763028) -- (10.419232885737152,9.223009672275632);
  \draw[graphedge] (10.86060557383581,9.672985761292084) -- (11.273131102561079,11.485444212817187);
  \draw[graphedge] (10.5,9.585) -- (9.125,10.96);
}

\newcommand{\DrawHutchHighlightedMessages}[1]{%
  \draw[#1backward] (8.633554839632314,9.1936548436884) -- (8.925699656505483,10.413591734060642);
  \draw[#1] (10.025926138805337,9.55344980098314) -- (8.925699656505483,10.413591734060642);
  \draw[#1backward] (8.725678068901274,9.116123769590807) -- (10.025926138805337,9.55344980098314);
  \draw[#1forward] (10.509410435627945,10.15575881609538) -- (10.854534777277964,11.350162155963739);
  \draw[#1forward] (9.254676010538902,11.016582090677774) -- (10.907702327342545,11.48111635659996);
  \draw[#1] (9.254676010538902,11.016582090677774) -- (10.509410435627945,10.15575881609538);
}

\newcommand{\DrawHutchOtherMessages}[1]{%
  \draw[#1backward] (8.495720562403982,9.123181581867124) -- (8.86255039711051,10.71411489953799);
  \draw[#1] (10.721527701390746,9.75466970477229) -- (8.86272712765304,10.709922772173998);
  \draw[#1forward] (10.721527701390746,9.75466970477229) -- (11.13218625975211,11.434249290510756);
  \draw[#1forward] (9.279717335663314,11.2019506874419) -- (10.953885190210585,11.612550360052282);
  \draw[#1] (10.263251373950125,9.343324963767907) -- (9.279717335663314,11.2019506874419);
  \draw[#1backward] (8.668649130570826,8.897020049923123) -- (10.263251373950125,9.343324963767907);
  \draw[#1] (8.51266685208021,10.957036034179117) -- (7.160897685032687,10.376885260128834);
  \draw[#1] (10.605685293779908,9.980564150382438) -- (8.51266685208021,10.957036034179117);
  \draw[#1forward] (10.605685293779908,9.980564150382438) -- (10.990444442586032,11.412352840930154);
  \draw[#1] (8.490269413939961,10.859592229312398) -- (7.192182354852323,10.28297955152153);
  \draw[#1] (10.337278325586944,9.534034382431186) -- (8.490269413939961,10.859592229312398);
  \draw[#1backward] (8.655964144800112,8.988216445480099) -- (10.330635492321075,9.527606513776723);
}

\newcommand{\DrawHutchAllMessages}[1]{%
  \DrawHutchHighlightedMessages{#1}
  \DrawHutchOtherMessages{#1}
}

\newcommand{\DrawHutchPropagationFormula}{%
  \node[formulabox] at (\HutchFormulaX,\HutchFormulaY)
    {$\displaystyle (A^3z)_i=\sum_j (A^3)_{ij}z_j$};
}

\newcommand{\DrawHutchEstimatorFormula}{%
  \node[formulabox] at (\HutchFormulaX,\HutchFormulaY)
    {$\displaystyle
      \mathbb{E}[z_iz_j]=0\ (i\ne j),\quad \mathbb{E}[z_i^2]=1$\\
     $\displaystyle
      \Rightarrow\ \mathbb{E}[z_i(A^3z)_i]=(A^3)_{ii}$};
}

\newcommand{\DrawHutchPanel}[1]{%
  \begin{scope}[#1]
    \DrawHutchAllMessages{messagepath}
    \DrawHutchGraphEdges
    \DrawHutchNodes
  \end{scope}
}

\newcommand{\DrawHutchHighlightedPanel}[1]{%
  \begin{scope}[#1]
    \DrawHutchAllMessages{mutedmessagepath}
    \DrawHutchHighlightedMessages{highlightedmessagepath}
    \DrawHutchGraphEdges
    \DrawHutchNodes
  \end{scope}
}

\resizebox{\linewidth}{!}{%
\begin{tikzpicture}[
  every node/.style={font=\large},
  graphnode/.style={
    draw=hutchnodeborder,
    fill=hutchnodebackground,
    line width=\HutchNodeOutlineWidth
  },
  graphnodelabel/.style={
    inner xsep=0.080cm,
    inner ysep=0.085cm,
    rounded corners=0.020cm
  },
  graphedge/.style={
    draw=hutchgraphedge,
    line width=\HutchGraphEdgeWidth
  },
  messagedash/.style={
    dash pattern=on 1pt off 1.5pt,
    dash phase=0pt
  },
  messagepath/.style={
    draw=hutchmessagepath,
    line width=\HutchMessagePathWidth,
    messagedash,
    line cap=round,
    line join=round
  },
  highlightedmessagepath/.style={
    draw=hutchmessagehighlight,
    line width=\HutchMessagePathWidth,
    messagedash,
    line cap=round,
    line join=round
  },
  mutedmessagepath/.style={
    draw=hutchmessagemuted,
    line width=\HutchMessagePathWidth,
    messagedash,
    line cap=round,
    line join=round
  },
  messagepathforward/.style={
    messagepath,
    -{Stealth[scale=\HutchMessageArrowScale]}
  },
  messagepathbackward/.style={
    messagepath,
    {Stealth[scale=\HutchMessageArrowScale]}-
  },
  highlightedmessagepathforward/.style={
    highlightedmessagepath,
    -{Stealth[scale=\HutchMessageArrowScale]}
  },
  highlightedmessagepathbackward/.style={
    highlightedmessagepath,
    {Stealth[scale=\HutchMessageArrowScale]}-
  },
  mutedmessagepathforward/.style={
    mutedmessagepath,
    -{Stealth[scale=\HutchMessageArrowScale]}
  },
  mutedmessagepathbackward/.style={
    mutedmessagepath,
    {Stealth[scale=\HutchMessageArrowScale]}-
  },
  formulabox/.style={
    font=\normalsize,
    align=center,
    text=black!88,
    fill=white,
    draw=black!18,
    rounded corners=2pt,
    inner xsep=5pt,
    inner ysep=4pt
  }
]

\DrawHutchPanel{}
\DrawHutchHighlightedPanel{xshift=5.7cm}

\end{tikzpicture}
}
\endgroup
    \caption{Message-passing view of Hutchinson's diagonal estimator. We illustrate triangle counting for all nodes simultaneously using random probes and ordinary message passing. \textbf{Left:} The arrowed paths show a subset of length-3 messages received at $v_1$ and $v_4$. After propagation, node $v$ observes $(A^3 z)_v=\sum_u (A^3)_{vu}z_u$, which mixes self-returning messages with messages originating elsewhere. Without identifying information, standard message passing cannot isolate the diagonal term. \textbf{Right:} Hutchinson's estimator correlates the propagated value with the original probe $z_v$. Since $\E[z_vz_u]=0$ for $u\neq v$ and $\E[z_v^2]=1$, the non-diagonal terms vanish in expectation, yielding $\E[z_v(A^3z)_v]=(A^3)_{vv}$. For an adjacency matrix, this diagonal entry counts length-3 closed walks at $v$, i.e., twice the number of triangles incident to $v$.}
    \label{fig:hutchinson_message_passing}
\end{figure}

Mathematically, these notions can all be expressed as functions of a Graph Shift Operator (GSO) $S$, such as the adjacency matrix $A$ or Laplacian $L$. 
Crucially, all such node-level self-interactions correspond to the \textit{diagonal entries} of these GSO-derived operators $M = f(S) \in \mathbb{R}^{n \times n}$.
In practice, $f$ is often approximated by low-degree polynomials of the GSO, $M=\sum_{k=0}^{K-1} \alpha_k S^k$. This naturally aligns with message-passing architectures, which implicitly compute GSO polynomials through neighborhood aggregation. However, standard message-passing GNNs cannot express any of the self-interaction-based structural properties above. 
The aggregation mechanism mixes information from different nodes, failing to isolate self-interactions. 

Our key observation is that message passing algorithms can access diagonal entries in a permutation equivariant way using a randomized probing mechanism. With a stochastic approach based on the Hutchinson diagonal estimator \cite{Hutchinson01011990}, nodes can efficiently approximate the diagonal of GSO powers, and hence the diagonal of any graph-derived operator, through simple message passing steps. This mechanism allows nodes to capture structural properties, such as cycle counts, within a fully learnable framework, avoiding rigid hand-crafted features and expensive higher-order architectures. See Figure~\ref{fig:hutchinson_message_passing} for an example of triangle counting. 

The resulting architecture, which we call DIAL, can be integrated into any standard GNN and is permutation equivariant in expectation. Adjusting the number of random probes allows a trade-off between computational cost and estimation accuracy. We prove DIAL can express a variety of important structural properties, including cycle counts, graph angles, subgraph centrality, and heat kernel signatures.

Empirically, we evaluate DIAL on molecular property prediction on ZINC and on synthetic node-level pattern counting tasks.
The results show that DIAL is competitive with spectral and random-probe positional encoding methods on ZINC, and improves over other random-probe methods on cycle counting while remaining substantially cheaper than specialized subgraph and higher-order architectures.
\footnote{The implementation of DIAL is available at \url{https://github.com/hcagri/DIAL}}

\section{Preliminaries}

A graph is denoted $G=(V,E)$ with $|V|=n$. A graph shift operator (GSO) is a matrix $\mathbf S \in \RR^{n \times n}$. Common choices include the adjacency matrix $\mathbf A$, normalized adjacency $\mathbf D^{-1/2} \mathbf A \mathbf D^{-1/2}$, the Laplacian $\mathbf L = \mathbf D - \mathbf A$, where $\mathbf D$ is the degree matrix, or the normalized Laplacian. We consider message-passing layers of the form
\begin{equation}
    \label{eq:generalized_GNN}
    \mathbf X^{(\ell)} = \sigma\Big(\sum_{k=0}^{K-1} \mathbf S^k \mathbf X^{(\ell-1)} \mathbf H_k^{(\ell)}\Big),
\end{equation}
where $\sigma$ is a nonlinearity and $\mathbf H_k^{(\ell)}$ are learnable weights.
The parameter $K$ controls the maximum hop distance used for aggregation (specifically $K-1$), with $K=2$ for typical GNNs such as GCN.

In what follows, we assume the graph is undirected, making the GSO symmetric. Let $\mathbf S = \Phi \Lambda \Phi^{-1}$ be the eigendecomposition of the GSO $\mathbf S$, where $\Lambda = \diag(\lambda_1, \dots, \lambda_n)$ is the diagonal matrix of eigenvalues and $\Phi = [\boldsymbol\phi_1, \dots, \boldsymbol\phi_n]$ is the matrix of eigenvectors. For symmetric GSOs (undirected graphs), $\Phi$ is orthonormal, i.e. $\Phi^{-1} = \Phi^T$.

Each eigenvalue $\mu$ defines an eigenspace $E_\mu$ spanned by the eigenvectors with eigenvalue $\mu$. Let $I_\mu = \{i: \lambda_i = \mu\}$ denote the corresponding index set and let $\Phi_\mu \in \RR^{n \times |I_\mu|}$ be the matrix whose columns are these eigenvectors. The projector onto $E_\mu$ is $P_\mu = \Phi_\mu \Phi_\mu^T$. These projections allow the GSO to be expressed as a sum over eigenspaces: $S = \sum_\mu \mu P_\mu$, where the sum is over distinct eigenvalues.

Graph angles describe how nodes relate to these eigenspaces.
For a node $v$, let $\mathbf e_v$ denote its indicator vector.
The graph angle of $v$ with respect to $E_\mu$ is
\[
    \angle_\mu(v) = (P_\mu)_{vv} = \|P_\mu \mathbf e_v\|^2 = \sum_{i \in I_\mu} (\boldsymbol\phi_i(v))^2.
\]
The term \textit{graph angle} comes from the geometric interpretation
\[
    (P_\mu)_{vv} = \cos^2(\theta_{v,\mu}),
    \qquad
    \theta_{v,\mu} = \angle(\mathbf e_v, E_\mu).
\]
Unlike individual eigenvector coordinates, graph angles are invariant to sign changes and changes of basis inside eigenspaces. These invariances remove arbitrary choices from the representation and can therefore simplify learning \cite{lim2022sign}.

Graph filters are operators of the form $p(S)$, where $p: \RR \ra \RR$ is applied to the GSO $S$.
For symmetric GSOs, spectral filters can be written as
\[
    p(S) = \Phi p(\Lambda) \Phi^T = \sum_\mu p(\mu) P_\mu.
\]
A polynomial filter is the special case where $p(\mu) = \sum_{k=0}^{K-1} a_k \mu^k$, equivalently
\[
    p(S) = \sum_{k=0}^{K-1} a_k S^k.
\]
The linear propagation part of Equation~(\ref{eq:generalized_GNN}) naturally implements polynomial filters of the GSO.

\section{Related Work}

\paragraph{Expressivity of GNNs.}
Standard message passing architectures are limited in their expressiveness. In particular, they are at most as powerful as the Weisfeiler-Leman test \citep{xu2018powerful} in distinguishing non-isomorphic graphs. This limitation surfaces in the inability of message-passing GNNs to count many basic substructures \citep{chen2020can}. For example, standard GNNs cannot count triangles \citep{morris2019weisfeiler} or longer cycles, despite the practical relevance of this information in several applications, such as molecular property prediction \cite{wollschlager2024expressivity}. From the perspective of this work, this limitation stems from the fact that message passing aggregates information arriving from neighboring nodes, but does not keep track of where that information originally comes from. Consequently, message passing cannot isolate self-interactions such as cycles. 

There have been many works proposing more expressive GNN architectures \citep{morris2019weisfeiler,maron2019provably,chen2020can,papp2021dropgnn,vignac2020building}. Higher-order GNNs operate on tuples of nodes, making richer structural interactions available to the model \cite{morris2019weisfeiler}. However, this comes at a cost growing polynomially with the tuple order. Other approaches increase expressivity by explicitly augmenting node representations with structural information. Substructure-counting methods add counts of selected substructures \cite{bouritsas2022improving,chen2020can}, but require selecting the relevant substructures in advance. Distance encodings instead assign each node an encoding of its distance to a target node set \cite{li2020distance}, conditioning the resulting representation on that set. In contrast to these approaches, our method exposes self-interaction information through randomized diagonal estimation. This allows a standard message passing backbone to access structural information without higher-order tuples, a preselected substructure vocabulary, or target-conditioned encodings.

\paragraph{Random Features in GNNs.}
Random node features have been studied as a way to overcome expressivity limitations in message passing \citep{dasoulas2019coloring,abboud2020surprising,papp2022theoretical}. These works show that random initialization can break node symmetries and increase expressive power, with universality results in some settings \citep{puny2020globalattentionimprovesgraph}. 
However, this relies on the model learning to use randomness as node identifiers, and the theoretical power does not directly provide stable, task-relevant structural quantities. Hence, more direct uses of random features have been proposed \cite{kanatsoulis2024counting,kanatsoulis2025learning,eliasof2023graph}.

R-PEARL \citep{kanatsoulis2025learning} introduces a randomized variant of a positional encoding framework. While effective for PEs, its expressivity guarantees are limited.\footnote{The expressivity guarantees come mainly from \citep{kanatsoulis2024counting}, which uses random features for substructure counting. We defer a more detailed discussion to Appendix~\ref{app:kanatsoulis}.} 

\citet{eliasof2023graph} use message passing with random features to learn positional encodings. They show that, combined with appropriate normalization, random feature propagation simulates power iteration and subspace iteration. This allows approximating the leading eigenvectors of the graph shift operator. They also note that random features can be used to estimate cycle counts using Hutchinson's trace estimator. However, the focus of their work is on positional encodings, and their analysis is based on what can be achieved in the limit as the propagation depth increases. In contrast, we focus on structural self-interactions and explicitly correlate propagated signals with the input randomness. This correlation is essential: it yields an unbiased estimator of diagonal entries in expectation and provides a stable structural signal that can be learned from finite random probes.

\paragraph{Spectral Graph Theory and GNNs.}

Spectral approaches to graph learning are rooted in spectral graph theory and graph signal processing \citep{chung1997spectral}. Graph signal processing interprets node features as signals on the graph and defines filters as functions of a graph shift operator \citep{shuman2013emerging}. Classical spectral filters are typically fixed or analytically chosen. 

Spectral GNNs learn such filters from data. Early spectral GNNs implemented filters directly in the Fourier basis, requiring an eigendecomposition \citep{bruna2013spectral}. ChebNet \citep{defferrard2016convolutional} introduced Chebyshev polynomial approximations. GCN can be viewed as a first-order approximation of this spectral filtering perspective~\cite{kipf2016semi}. These methods apply graph-derived operators to node features, but the resulting node representations do not explicitly expose the diagonal entries of those operators. Our method complements spectral GNNs by showing how randomized probing can make these diagonal entries accessible to message passing.

\section{Main Tool: Diagonal Estimation via Message Passing}
\label{sec:diag_estimate}
This section presents our main technical tool: a randomized technique to estimate the diagonal entries of a matrix via message passing. 
It is based on Hutchinson's diagonal estimator \cite{Hutchinson01011990}, which is formalized in the next lemma:

\begin{restatable}{lemma}{diagonalEstimator}
    \label{lem:hutchinson_diagonal_estimator}
    Let $M \in \RR^{n \times n}$. Let $z \in \RR^n$ be a random vector with i.i.d. Rademacher entries (i.e., each entry is independently $+1$ or $-1$ with probability $1/2$). Then 
    $$\E[z \odot (Mz)] = \diag(M)$$
    where $\odot$ is the elementwise product. The variance of the $i$th entry is $\sum_{j \neq i} M_{ij}^2$.
\end{restatable}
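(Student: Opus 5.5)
The plan is to compute the $i$th entry directly and use only the first two moments of Rademacher variables. Write $(z \odot (Mz))_i = z_i \sum_j M_{ij} z_j = M_{ii} z_i^2 + \sum_{j \neq i} M_{ij} z_i z_j$. Since $z_i^2 = 1$ deterministically, the first term equals $M_{ii}$ exactly. For $j \neq i$, independence and zero mean give $\E[z_i z_j] = \E[z_i]\E[z_j] = 0$. By linearity of expectation, $\E[(z\odot(Mz))_i] = M_{ii}$, which establishes $\E[z \odot (Mz)] = \diag(M)$ entrywise.

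For the variance, the same decomposition shows that $(z\odot(Mz))_i - M_{ii} = \sum_{j\neq i} M_{ij} z_i z_j =: Y_i$. The variance of the $i$th entry is therefore $\E[Y_i^2]$. We expand
\[
\E[Y_i^2] = \sum_{j\neq i}\sum_{k\neq i} M_{ij}M_{ik}\,\E[z_i^2 z_j z_k] = \sum_{j\neq i}\sum_{k\neq i} M_{ij}M_{ik}\,\E[z_j z_k],
\]
again using $z_i^2=1$. The pairs $j,k$ both differ from $i$, so $z_j z_k$ does not involve $z_i$. Independence then gives $\E[z_j z_k] = \delta_{jk}$, because $\E[z_j^2]=1$ and, for $j \neq k$, $\E[z_j]\E[z_k]=0$. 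This leaves $\sum_{j\neq i} M_{ij}^2$, as claimed.

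There is no real obstacle. The one point needing care is the variance step: we must use $z_i^2 = 1$ (a Rademacher-specific fact) before taking expectations. Otherwise the fourth-moment term $\E[z_i^4]$ and the cross terms would have to be tracked separately. Handled this way, the diagonal contributes zero variance, which is why the sum runs only over $j \neq i$. The cross terms $j\neq k$ vanish by independence, so pairwise independence among the off-diagonal coordinates together with $z_i^2=1$ is all that is used.
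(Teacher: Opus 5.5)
Your proof is correct and follows essentially the paper's argument: expand $z_i\sum_j M_{ij}z_j$, kill the cross terms using $\E[z_iz_j]=0$, and use $z_i^2=1$ before taking expectations of the square. The only cosmetic difference is that you split off the deterministic term $M_{ii}$ first, as the paper itself does in its appendix variance lemma, whereas the paper's proof of this lemma computes $\E[d_i^2]=\sum_j M_{ij}^2$ and then subtracts $M_{ii}^2$.
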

\noindent \textit{Proof.} The $i$th entry is $\E[z_i (Mz)_i] = \E[\sum_j M_{ij} z_i z_j]$. Here, $\E[z_i z_j]=0$ for $i \neq j$ since $\E[z_i] = 0$ and $z_i, z_j$ are independent, making all cross terms vanish. Thus, only the $j=i$ term remains, giving $\E[z_i^2] M_{ii} = M_{ii}$. 

The variance is computed from $\Var(d_i) = \E[d_i^2] - (\E[d_i])^2$. Here, 
$$d_i^2 = (\sum_j M_{ij} z_i z_j)(\sum_j M_{ij} z_i z_j) = \sum_{j,k} M_{ij} M_{ik} z_i^2 z_j z_k = \sum_{j,k} M_{ij} M_{ik} z_j z_k$$ 
because $z_i^2 = 1$. Taking the expectation, $\E[d_i^2] = \E\Big[\sum_{j,k} M_{ij} M_{ik} z_j z_k\Big] =  \sum_j M_{ij}^2$, since $\E[z_j z_k] = 0$ for $j \neq k$. Thus,
$\Var(d_i) = \E[d_i^2] - (\E[d_i])^2 = \sum_j M_{ij}^2 - M_{ii}^2$
\QED

In the context of GNNs, the matrix $M$ can be the GSO or a matrix derived from it, such as powers or spectral filters. $Mz$ corresponds to a message passing operation on the graph with input features $z$.

\subsection{Warm-up: Cycle Counting via Diagonal Estimation}

Consider a message passing model of the form 
\begin{equation}
    \label{eq:MPNN}
    X^{(l)} = \sigma(A X^{(l-1)} W)    
\end{equation}
Ignoring nonlinearities and weights, after $k$ layers, we have $X^{(k)} = A^k X^{(0)}$. We illustrate how diagonal estimation can be used to compute cycle counts of the graph:

\begin{lemma}
    With a GNN in Equation~\ref{eq:MPNN} and uniformly random initial features $X^{(0)} \in \{-1,1\}^n$, each node $v$ can compute the number of closed walks of length $k$ starting and ending at $v$ in expectation.
\end{lemma}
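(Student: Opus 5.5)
The plan is to reduce the statement to Lemma~\ref{lem:hutchinson_diagonal_estimator} with $M = A^k$, and then use the standard fact that $(A^k)_{vv}$ counts closed walks of length $k$ at $v$.

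\textbf{Step 1: choose parameters that make the network linear.} In Equation~\ref{eq:MPNN} I take $W = 1$, a scalar since the features are one-dimensional, and $\sigma$ the identity. The statement allows us to ignore nonlinearities and weights, and this is the linear regime the text describes. Unrolling the recursion gives $X^{(k)} = A^k X^{(0)}$. Each node $v$ holds $(A^k z)_v$ with $z = X^{(0)}$, and it also retains its own initial value $z_v$, for instance through a skip connection or stored feature channel. So $v$ can form the product $d_v = z_v (A^k z)_v$ locally.

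\textbf{Step 2: apply Lemma~\ref{lem:hutchinson_diagonal_estimator}.} The entries of $z$ are i.i.d.\ Rademacher, because they are uniform on $\{-1,1\}$ and independent. With $M = A^k$ the lemma gives $\E[d_v] = (A^k)_{vv}$. The same lemma also shows the variance is $\sum_{u\neq v} (A^k)_{vu}^2$. This is worth remarking, though it is not needed for a statement made in expectation.

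\textbf{Step 3: walk-counting identity.} Expanding the matrix product gives
\[
(A^k)_{vv} = \sum_{u_1,\dots,u_{k-1}} A_{v u_1} A_{u_1 u_2}\cdots A_{u_{k-1} v}.
\]
Each summand is $1$ exactly when $v,u_1,\dots,u_{k-1},v$ is a walk in $G$, and $0$ otherwise. A short induction on $k$ makes this rigorous: $(A^k)_{vw}$ equals the number of length-$k$ walks from $v$ to $w$.

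\textbf{Main obstacle.} The only delicate point is Step 1, namely making the plain recursion of Equation~\ref{eq:MPNN} yield the correlation $z_v\cdot(A^kz)_v$. The recursion alone only propagates $A^k z$, so the proof has to state explicitly that node $v$ keeps $z_v$ available for a final pointwise multiplication. This readout is exactly the DIAL correlation step. If it had to be realized inside the architecture, I would carry $z$ as an extra channel, which corresponds to the $S^0$ term in Equation~\eqref{eq:generalized_GNN}, and I would note that the product of two channels is computed locally at each node.
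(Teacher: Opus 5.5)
Your proposal is correct and follows the paper's proof: both initialize with Rademacher features, apply Lemma~\ref{lem:hutchinson_diagonal_estimator} with $M=A^k$, and invoke the walk-counting identity (Lemma~\ref{lem:walk_counting}). Your remark that node $v$ must keep $z_v$ available for the final product $z_v(A^kz)_v$ makes explicit a step the paper leaves implicit in the DIAL readout, and you handle it correctly.
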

\begin{proof}
    Initialize node features as independent Rademacher random variables, $z \in \{-1,1\}^n$. 
    By Lemma~\ref{lem:hutchinson_diagonal_estimator}, $\E[z \odot (A^k z)] = \diag(A^k)$.
    It is well known that $(A^k)_{vv}$ counts the number of closed walks of length $k$ starting and ending at node $v$ (see Lemma~\ref{lem:walk_counting}).
\end{proof}

\subsection{DIAL: Diagonal Access Layer}

Building on Lemma~\ref{lem:hutchinson_diagonal_estimator}, we propose the Diagonal Access Layer (DIAL), a stochastic message-passing module that gives nodes access to structural quantities through diagonal entries of graph-derived operators.
DIAL is based on propagating random features, and correlating the propagated probes with the original random values.
The technique is general and can be applied to directed and undirected graphs for any GNN architecture. 

\paragraph{Random probes}
For initial features, we sample a probe vector $\mathbf z \in \{-1,1\}^n$ with i.i.d.\ Rademacher entries.

\paragraph{Probe propagation}
Let $\mathbf p^{(0)} = \mathbf z$.
DIAL's main hyperparameter is $K$, the number of layers from which diagonal estimates are formed.
For $k=1,\dots,K-1$, DIAL propagates the probe features by
\[
    \mathbf p^{(k)} = \mathcal T_{\mathbf S}^{(k)}(\mathbf p^{(k-1)}),
\]
where $\mathcal T_{\mathbf S}^{(k)}$ is a local propagation operator built from the GSO $\mathbf S$.
For example, choosing $\mathcal T_{\mathbf S}^{(k)}(\mathbf p)=\mathbf S\mathbf p$ gives $\mathbf p^{(k)}=\mathbf S^k \mathbf z$.
	More generally, the stack of probe-propagation layers may implement powers of the GSO or polynomial filters, such as those arising from the linear propagation part of Equation~(\ref{eq:generalized_GNN}). Thus each propagated probe state can be written as $\mathbf p^{(k)}=M_k \mathbf z$ for some graph-derived operator~$M_k$.

\paragraph{Diagonal readout}
For each state $k=0,\dots,K-1$, DIAL forms a local correlation with the original probe,
\[
    \mathbf d^{(k)} = \mathbf z \odot \mathbf p^{(k)},
\]
where $\odot$ denotes the elementwise product.
For every $k$, Lemma~\ref{lem:hutchinson_diagonal_estimator} gives
\[
    \E\!\left[\mathbf z \odot \mathbf p^{(k)}\right]
    =
    \diag(M_k).
\]
Thus the $K$ DIAL readouts expose structural signals corresponding to the diagonals of $M_0,\dots,M_{K-1}$. For instance, if $M_k=\mathbf A^k$, these signals give closed-walk counts up to length $K-1$.

\paragraph{Aggregation}
In practice, we reduce variance by using $r$ independent probes $\mathbf z^{(1)}, \dots, \mathbf z^{(r)}$.
Equivalently, stacking them as $\mathbf Z = [\mathbf z^{(1)}, \dots, \mathbf z^{(r)}] \in \RR^{n \times r}$ introduces a sample dimension, and the same probe propagation is applied column-wise to produce estimates $\mathbf D^{(k)} \in \RR^{n \times r}$.
The state-wise estimates are then aggregated across propagation depths and independent probes using a shared readout:
\[
    \mathbf h_i^{\mathrm{DIAL}}
    =
    \frac{1}{r}\sum_{s=1}^r
    \MLP\!\left(
        \big\|_{k=0}^{K-1} \mathbf D^{(k)}_{i,s}
    \right)
    \in \RR^{d_{\mathrm{dial}}}.
\]
Averaging over $r$ probes reduces the estimator variance at a rate of $1/r$, while the distribution of the probes makes the construction permutation equivariant in expectation.

The resulting DIAL layer can be stacked with a standard message-passing GNN by using $\mathbf h_i^{\mathrm{DIAL}}$ directly as a structural node representation or concatenating it with ordinary node features,
\[
    \mathbf H^{(0)}_i = \mathbf x_i \mathbin{\|} \mathbf h_i^{\mathrm{DIAL}}.
\]

\section{Structural Embeddings via Diagonal Estimation}
\label{sec:SEs}

Many classical node-level structural features, such as closed-walk counts and centrality measures, can be expressed as node-wise statistics derived from linear operators on the graph.
In this section, we first show that DIAL gives nodes access to diagonal entries of graph filters.
This can be used to recover several classical node quantities relevant for structural embeddings, including closed-walk counts, subgraph centrality, heat kernel signatures and graph angles.
We also extend the diagonal estimation technique to off-diagonals for neighboring nodes, which is useful for counting simple cycles. Proofs for this section are in Appendix~\ref{app:SEs}.

\subsection{Diagonals of Graph Filters}

Recall that a polynomial filter of degree $K-1$ can be expressed as
$$p(S) = \sum_{k=0}^{K-1} a_k S^k$$
where $S$ is a graph shift operator. With diagonalization, this can also be written as $p(S)=\Phi p(\Lambda) \Phi^T$, where $p(\Lambda) = \diag(p(\lambda_1), \dots, p(\lambda_n))$. In particular, spectral filters act independently on each eigenspace of $S$.

\begin{figure}[t]
    \centering
    \includegraphics[width=0.85\linewidth]{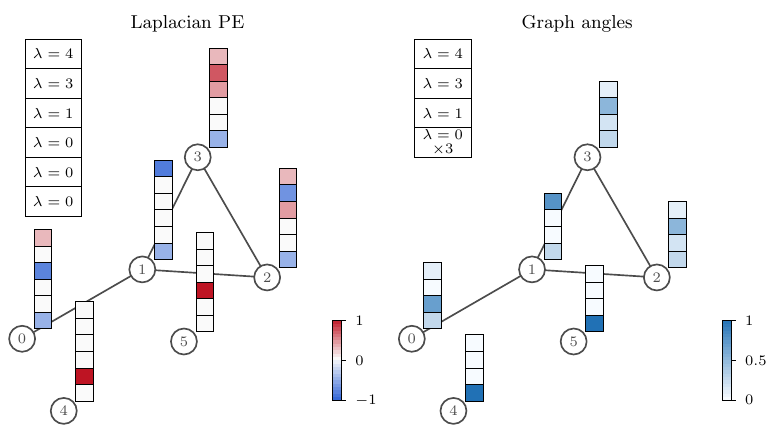}
    \caption{Graph-angle structural encodings compared to Laplacian positional encodings. \textbf{Left}:  Laplacian PE represents node $i$ by $\Phi_{i,:}$, the row of eigenvector coordinates (in practice only a subset). This captures relative position in the graph: nodes 0 and 1 are close to each other so their encodings are similar, whereas nodes 4 and 5 are in different components so their encodings can be very different. \textbf{Right}: graph angles represent node $i$ by $((P_{\mu_j})_{ii})_{j=1}^q$, where $\mu_1,\ldots,\mu_q$ are the distinct (Laplacian) eigenvalues. Graph angles are an example of a structural encoding: unlike eigenvectors, graph angles assign the same representation to automorphic nodes (such as 4 and 5). They are also invariant to basis and sign choices.}
    \label{fig:graph_angle_structural_encoding}
\end{figure}

The linear propagation part of Equation~(\ref{eq:generalized_GNN}) allows such filters to be applied in message passing effectively. However, many relevant structural properties, such as cycle counts, depend on \textit{self-interactions}. Intuitively, this is encoded in the diagonal entries of $p(S)$. The diagonal is not accessible via standard message passing, which aggregates information row-wise and cannot isolate self-interactions. The following theorem shows that diagonal entries can be accessed in expectation using DIAL:

\begin{restatable}{theorem}{diagSpectralFilter}
    \label{thm:diagonal_of_spectral_filter}
    DIAL can compute node-wise statistics of the form
    $$v \mapsto (p(S))_{vv}$$
    in expectation, where $p$ is a polynomial filter of degree at most $K-1$. Equivalently, DIAL can compute expressions of the form
    $$v \mapsto \sum_{i=1}^n g(\lambda_i) (\boldsymbol\phi_i(v))^2$$
    where $\lambda_i$ and $\boldsymbol\phi_i$ are the eigenvalues and eigenvectors of the GSO, and $g$ is any polynomial function of degree at most $K-1$. Moreover, multiple polynomial filters can be expressed in parallel by increasing the (output) feature dimension.
\end{restatable}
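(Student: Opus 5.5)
The plan is to combine linearity of expectation with the DIAL construction using $\mathcal T_{\mathbf S}^{(k)}(\mathbf p)=\mathbf S\mathbf p$. Then the propagated states are $\mathbf p^{(k)}=\mathbf S^k\mathbf z$, so $M_k=\mathbf S^k$. By Lemma~\ref{lem:hutchinson_diagonal_estimator}, each readout satisfies $\E[\mathbf d^{(k)}]=\diag(\mathbf S^k)$ for $k=0,\dots,K-1$.

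Given a filter $p(S)=\sum_{k=0}^{K-1}a_kS^k$, take the readout MLP in the aggregation step to be the linear map
\[
\big\|_{k}\mathbf D^{(k)}_{v,s}\mapsto\sum_{k=0}^{K-1}a_k\mathbf D^{(k)}_{v,s}.
\]
Any MLP class containing linear maps suffices, for instance identity activations or a final linear layer with no hidden nonlinearity on this path. Linearity of expectation together with averaging over the $r$ probes gives
\[
\E[\mathbf h^{\mathrm{DIAL}}_v]=\sum_k a_k(\mathbf S^k)_{vv}=(p(S))_{vv}.
\]
Equivalently, one can choose $\mathcal T^{(k)}$ to implement the filter directly, so that $M=p(S)$, and apply the lemma once. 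The only subtlety is this linearity requirement on the readout. A nonlinear MLP does not commute with expectation, so I will state that the construction uses the linear part of the MLP. If only a nonlinear MLP is allowed, I would fall back on the fact that the statement is only claimed \emph{in expectation} of a linear statistic.

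For the spectral form, use the eigendecomposition $S=\Phi\Lambda\Phi^T$, so that $p(S)=\Phi p(\Lambda)\Phi^T$. The diagonal entry is then
\[
(p(S))_{vv}=\sum_i p(\lambda_i)\Phi_{vi}^2=\sum_i p(\lambda_i)(\boldsymbol\phi_i(v))^2.
\]
Conversely, any polynomial $g$ of degree at most $K-1$ is such a $p$. The two formulations therefore coincide with $g=p$.

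For multiple filters $p_1,\dots,p_m$, let the readout's output layer have $d_{\mathrm{dial}}=m$ rows. Row $j$ carries the coefficients $(a^{(j)}_k)_k$. Applying the same argument coordinatewise shows that output $j$ has expectation $(p_j(S))_{vv}$, all computed from the same shared probes.
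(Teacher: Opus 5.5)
Your proof is correct, but your main route differs from the paper's. The paper's proof is essentially the one-line alternative you mention.

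The paper places the coefficients $a_k$ in the probe propagation. It appeals to the linear part of Equation~(\ref{eq:generalized_GNN}) to obtain a single state $\mathbf p=p(S)\mathbf z$. It then applies Lemma~\ref{lem:hutchinson_diagonal_estimator} once with $M=p(S)$ and treats the raw correlation $\mathbf z\odot p(S)\mathbf z$ as the output, never discussing the readout MLP. You instead fix the propagation at $\mathcal T^{(k)}=S$ and put the coefficients in the readout.

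Because $\odot$ is linear in its second argument, $\sum_k a_k\,\mathbf z\odot S^k\mathbf z=\mathbf z\odot p(S)\mathbf z$. So the two constructions give literally the same estimator per probe, and the variance formula of Lemma~\ref{lem:matrix_entry_variance} with $M=p(S)$ applies to yours unchanged. The only difference is where the learnable parameters sit.

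Your version has three advantages. It needs no learnable polynomial propagation. It matches the DIAL layer as actually defined, with concatenated $\mathbf D^{(k)}$ fed to a shared MLP. It also handles the multiple-filter clause more economically: one set of $K$ power readouts serves every filter, and only $d_{\mathrm{dial}}$ grows.

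Its cost is the linear readout, which you rightly flag. Your somewhat vague fallback for nonlinear MLPs is unnecessary, though. A one-hidden-layer ReLU MLP realizes any linear functional exactly via $w^\top x=\mathrm{ReLU}(w^\top x)-\mathrm{ReLU}(-w^\top x)$. The paper's version sidesteps the MLP question, but in exchange it relies on the probe-propagation stack implementing a full polynomial filter in one state, as in Equation~(\ref{eq:generalized_GNN}), rather than just successive powers of $S$.
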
%

\subsection{Structural Quantities}

The diagonal-filter view gives a compact way to recover several standard structural embeddings.
We first focus on the quantities that can be expressed as polynomials of the GSO.

\paragraph{Example: graph angles.}
Figure~\ref{fig:graph_angle_structural_encoding} illustrates the distinction between positional and structural spectral encodings.
Graph angles, defined in the preliminaries, are a canonical example of diagonal spectral features.
They describe how strongly a node participates in an eigenspace through $\angle_\mu(v)=(P_\mu)_{vv}$.
This differs from eigenvector positional encodings, which assign nodes coordinates in selected eigenvectors.
While eigenvector coordinates are useful positional features, they depend on sign choices and, for repeated eigenvalues, on an arbitrary basis for the eigenspace.
Graph angles instead summarize the whole eigenspace through the projector $P_\mu$, making them intrinsic structural features.

\begin{restatable}{theorem}{structuralQuantities}
    \label{thm:structuralQuantities}
    DIAL can compute the following node-level structural quantities in expectation:
    \begin{itemize}
        \renewcommand{\labelitemi}{$\bullet$}
        \item \textbf{Closed-walk counts.}
        For every $1\le k\le K-1$,
        \[
            v \mapsto (A^k)_{vv},
        \]
        gives the number of closed walks of length $k$ starting and ending at $v$.

        \item \textbf{Graph angles.}
        For a Laplacian eigenspace $E_\mu$ with projector $P_\mu$,
        \[
            v \mapsto \angle_\mu(v)=(P_\mu)_{vv},
        \]
        can be computed, provided the eigenspace projector $P_\mu$ is represented by a polynomial in $L$ of degree at most $K-1$.
        In particular, this holds if $L$ has at most $K$ distinct eigenvalues.
    \end{itemize}
\end{restatable}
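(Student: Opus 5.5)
Both items follow from Theorem~\ref{thm:diagonal_of_spectral_filter}: in each case I exhibit the target quantity as the diagonal of a polynomial filter of degree at most $K-1$ in the chosen GSO, and then invoke that theorem.

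\textbf{Closed-walk counts.} Take $S=A$ and, for fixed $1\le k\le K-1$, the monomial filter $p(x)=x^k$, which has degree $k\le K-1$. Concretely, the probe propagation $\mathcal T^{(j)}_S(\mathbf p)=A\mathbf p$ gives $\mathbf p^{(k)}=A^k\mathbf z$. Lemma~\ref{lem:hutchinson_diagonal_estimator} then yields $\E[\mathbf z\odot \mathbf p^{(k)}]=\diag(A^k)$. The readout MLP only needs to select coordinate $k$ of the concatenation $\|_{j}\mathbf D^{(j)}_{i,s}$. This selection is linear, so it commutes with the average over $r$ probes and with the expectation. Finally, I recall the standard fact (Lemma~\ref{lem:walk_counting}) that $(A^k)_{vv}=\sum_{u_1,\dots,u_{k-1}}A_{vu_1}A_{u_1u_2}\cdots A_{u_{k-1}v}$ counts closed walks of length $k$ at $v$. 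I would prove this by induction on $k$ if needed. Since the model can output several filters in parallel, all $k=1,\dots,K-1$ are obtained simultaneously.

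\textbf{Graph angles.} Take $S=L$, which is symmetric, so $L=\sum_\mu \mu P_\mu$ with orthogonal projectors $P_\mu$ satisfying $P_\mu P_{\mu'}=\delta_{\mu\mu'}P_\mu$. By hypothesis $P_\mu=q(L)$ for some polynomial $q$ with $\deg q\le K-1$. Theorem~\ref{thm:diagonal_of_spectral_filter} applied to $q$ then gives $v\mapsto (q(L))_{vv}=(P_\mu)_{vv}=\angle_\mu(v)$ in expectation. Equivalently, in the spectral form this is $\sum_i g(\lambda_i)\boldsymbol\phi_i(v)^2$ with $g=\mathbf 1_{\{\mu\}}$ on the spectrum.

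For the ``in particular'' clause, let $\mu_1,\dots,\mu_m$ with $m\le K$ be the distinct eigenvalues. I use the Lagrange interpolation polynomial
\[
q_\mu(x)=\prod_{\mu'\neq\mu}\frac{x-\mu'}{\mu-\mu'},
\]
which has degree $m-1\le K-1$. By the spectral calculus, $q_\mu(L)=\sum_{\mu'}q_\mu(\mu')P_{\mu'}=P_\mu$, because $q_\mu(\mu')=\delta_{\mu\mu'}$. The proof therefore reduces to checking that polynomial functional calculus acts eigenspace-wise, which was already noted in the preliminaries ($p(S)=\sum_\mu p(\mu)P_\mu$).

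\textbf{Main obstacle.} The only step needing care is realizing the filter coefficients inside DIAL. Here I would argue that the MLP readout can implement any fixed linear combination $\sum_k a_k \mathbf D^{(k)}_{i,s}$, for instance with a linear layer or an identity-approximating nonlinearity on bounded inputs. Linearity of expectation then gives $\E[\sum_k a_k \mathbf z\odot L^k\mathbf z]=\diag(\sum_k a_k L^k)$. This is exactly the content of Theorem~\ref{thm:diagonal_of_spectral_filter}, so I would simply cite it rather than reprove it.
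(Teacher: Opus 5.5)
Your proposal is correct and follows essentially the same route as the paper: closed-walk counts come from the monomial filter $A^k$ together with the walk-counting lemma, and graph angles come from the Lagrange interpolation polynomial $q_\mu$ with $q_\mu(L)=P_\mu$ of degree $m-1\le K-1$, both fed into Theorem~\ref{thm:diagonal_of_spectral_filter}. Your remark on realizing the coefficients through a linear readout, so that it commutes with the expectation, differs slightly from the paper's claim that the probe-propagation weights learn the coefficients, but it is a harmless variant of the same step.
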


\noindent Spectral quantities defined by exponential graph filters can be approximated with polynomials of the GSO:
\begin{restatable}{theorem}{approxStructuralQuantities}
    \label{cor:approx_structural_quantities}
    DIAL can approximate the following node-level structural quantities in expectation:
    \begin{itemize}
        \item \textbf{Subgraph centrality.}
        For $\SC(v)=(e^A)_{vv}$, if $A$ is symmetric and $\|A\|_2\le \alpha$, the degree-$(K-1)$ Taylor approximation yields an estimate $\widehat{\SC}_K(v)$ satisfying
        \[
            |\E[\widehat{\SC}_K(v)]-\SC(v)|
            \le
            \frac{e^\alpha \alpha^K}{K!}.
        \]

        \item \textbf{Heat kernel signature.}
        For $\mathrm{HKS}_t(v)=(e^{-tL})_{vv}$, if $L$ is symmetric and $\|L\|_2\le \beta$, the degree-$(K-1)$ Taylor approximation yields an estimate $\widehat{\mathrm{HKS}}_{t,K}(v)$ satisfying
        \[
            |\E[\widehat{\mathrm{HKS}}_{t,K}(v)]-\mathrm{HKS}_t(v)|
            \le
            \frac{(t\beta)^K}{K!}.
        \]
    \end{itemize}
\end{restatable}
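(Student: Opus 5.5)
The plan is to reduce both items to Theorem~\ref{thm:diagonal_of_spectral_filter} and then bound a scalar Taylor remainder through the spectral decomposition.

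\emph{Subgraph centrality.} Take the truncated Taylor polynomial $p_K(x)=\sum_{k=0}^{K-1} x^k/k!$, which has degree $K-1$. Apply DIAL with the GSO $S=A$ and the filter $p_K(A)$, and define $\widehat{\SC}_K(v)=z_v\,(p_K(A)z)_v$, averaged over probes.

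\begin{itemize}
\item By Lemma~\ref{lem:hutchinson_diagonal_estimator}, or equivalently Theorem~\ref{thm:diagonal_of_spectral_filter}, we get $\E[\widehat{\SC}_K(v)]=(p_K(A))_{vv}$.
\item The bias is therefore $\bigl|(e^A-p_K(A))_{vv}\bigr|$.
\item Since $A$ is symmetric, write $A=\Phi\Lambda\Phi^T$. With $r_K(x)=e^x-p_K(x)$, this gives
\[
(r_K(A))_{vv}=\sum_i r_K(\lambda_i)\,\boldsymbol\phi_i(v)^2 .
\]
\item The weights $\boldsymbol\phi_i(v)^2$ are nonnegative and sum to $1$, because $\Phi$ is orthonormal. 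Hence the bias is at most $\max_i |r_K(\lambda_i)|$.
\item Every eigenvalue satisfies $|\lambda_i|\le\|A\|_2\le\alpha$.
\item By Taylor's theorem with Lagrange remainder, $r_K(x)=e^{\xi}x^K/K!$ for some $\xi$ between $0$ and $x$. For $|x|\le\alpha$ this gives $|r_K(x)|\le e^{\alpha}\alpha^K/K!$, which is the claimed bound.
\end{itemize}

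\emph{Heat kernel signature.} Use the same argument with $S=L$ and $q_K(x)=\sum_{k=0}^{K-1}(-tx)^k/k!$, again a polynomial of degree $K-1$ in $L$. The remainder of $e^{-tx}$ is $e^{-t\xi}(-tx)^K/K!$ with $\xi$ between $0$ and $x$.

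The only point needing care is the exponential factor, and this is what removes the $e^\alpha$-type factor from the bound. For a symmetric Laplacian, $L=D-A$ is positive semidefinite, so the eigenvalues lie in $[0,\beta]$. Then $\xi\ge 0$ and $t\ge0$ give $e^{-t\xi}\le 1$, and the bias is at most $(t\beta)^K/K!$. I will state the nonnegativity of the Laplacian spectrum explicitly, for both the combinatorial and normalized Laplacian, and the assumption $t\ge 0$.

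Both polynomials have degree at most $K-1$, so they are realizable by DIAL's probe propagation, for example via the linear part of Equation~(\ref{eq:generalized_GNN}) with fixed coefficients. This completes the reduction.
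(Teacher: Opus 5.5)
Your proposal is correct and takes essentially the same route as the paper: the same truncated Taylor filters, unbiasedness via Theorem~\ref{thm:diagonal_of_spectral_filter}, reduction through the spectral decomposition to a scalar Lagrange-remainder bound, and positive semidefiniteness of $L$ (with $t\ge 0$) to remove the exponential factor for the heat kernel. The only cosmetic difference is that you bound the diagonal entry as the convex combination $\sum_i r_K(\lambda_i)\boldsymbol\phi_i(v)^2$, whereas the paper bounds $|(p_{K-1}(A)-e^A)_{vv}|$ by the spectral norm $\|p_{K-1}(A)-e^A\|_2$; both give the same constants.
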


\noindent Estimator variances for both the exact and approximate quantities above are derived in Appendix~\ref{app:estimator_variance}.

\subsection{Neighbor Off-Diagonal Readout}

Some structural quantities require off-diagonal entries of graph-derived operators.
For example, the number of common neighbors of adjacent nodes $u$ and $v$ is $(A^2)_{uv}$, which is useful for link prediction, and also appears in formulas for counting simple cycles.
The same random probing idea can estimate such entries: nodes first share their original probe values with their neighbors, and after $\ell$ layers of message propagation, each endpoint correlates the last layer state with the original probe values of their neighbors.

\begin{restatable}{theorem}{neighborOffDiagonalFilter}
    \label{thm:neighbor_off_diagonal_filter}
    For each node $v$, DIAL can compute the multiset of off-diagonal elements of $p(S)$ over neighbors $w \in N(v)$:
    \[
        \{\!\{(p(S))_{vw}: w \in N(v)\}\!\}
    \]
    in expectation, where $p$ is a polynomial filter of degree at most $K-1$.
\end{restatable}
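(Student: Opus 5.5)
The plan is to mimic the proof of Lemma~\ref{lem:hutchinson_diagonal_estimator}, replacing the self-correlation $z_v\,(Mz)_v$ by a cross-correlation $z_w\,(Mz)_v$ between the propagated state at $v$ and the original probe of a neighbor $w$. Let $M=p(S)=\sum_{k=0}^{K-1}a_kS^k$. As in the probe-propagation step of DIAL, I would run the linear propagation with weights $a_k$ on a Rademacher probe $z$, so that after $K-1$ message-passing steps node $v$ holds $(Mz)_v=\sum_u M_{vu}z_u$. In parallel, at the very first layer, every node $w$ sends its raw probe value $z_w$ to all its neighbors. This is one ordinary message-passing round, but the messages are kept separately per edge, exactly as in edge-wise message functions, rather than summed.

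The key computation is the following. For each $w\in N(v)$, node $v$ forms the edge-wise readout
\[
d_{vw}=z_w\,(Mz)_v=\sum_u M_{vu}\,z_wz_u .
\]
Because the entries of $z$ are independent with $\E[z_wz_u]=\delta_{wu}$, the cross terms vanish, and we get $\E[d_{vw}]=M_{vw}=(p(S))_{vw}$. The variance is $\sum_{u\neq w}M_{vu}^2$, by the same calculation as in Lemma~\ref{lem:hutchinson_diagonal_estimator}.

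Next comes aggregation. Node $v$ receives the collection $\{\!\{d_{vw}:w\in N(v)\}\!\}$, whose expectation entrywise is the target multiset. Permutation invariance over neighbors is automatic because the collection is indexed only through edges. To hand this multiset to the rest of the network, I would apply a shared map $\psi$ to each $d_{vw}$ and sum over neighbors, as in the standard DeepSets / GIN-style injective multiset encoding. I would then note that if $\psi$ is linear, the expectation commutes with the sum, giving $\sum_{w}\psi(M_{vw})$. Alternatively, one can average $\psi(d_{vw})$ over the $r$ independent probes before any nonlinearity, which converges to $\psi$ applied to the expectation.

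The main obstacle is making precise what "compute the multiset in expectation" means once a nonlinear injective multiset encoder is used, since $\E[\psi(d)]\neq\psi(\E d)$ in general. I would resolve this the way Theorem~\ref{thm:diagonal_of_spectral_filter} is presumably handled. The claim is that each edge-wise quantity $d_{vw}$ is an unbiased estimate of $(p(S))_{vw}$. The multiset of these estimates is then available at $v$ through ordinary neighbor aggregation, and multiple filters can be handled in parallel by widening the feature dimension. Any nonlinear encoding is applied only after the probe average, where the law of large numbers (rate $1/r$) controls the resulting error.
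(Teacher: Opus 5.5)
Your proposal is correct and follows essentially the same route as the paper: form the edge-wise cross-correlation $z_w\,(p(S)\mathbf z)_v$ for each $w\in N(v)$, use $\E[z_wz_u]=\delta_{wu}$ to get $\E[z_w(p(S)\mathbf z)_v]=(p(S))_{vw}$, and collect these values through neighbor aggregation. The paper has neighbors send $z_w$ in a GIN layer after propagation rather than at the first layer, which makes no difference; your extra remarks on nonlinear multiset encoders go beyond the paper, which only argues per edge by temporarily retaining the sender's identity. One slip to fix: averaging $\psi(d_{vw})$ over probes converges to $\E[\psi(d_{vw})]$, not $\psi(\E[d_{vw}])$, so the per-edge probe average must be taken before applying $\psi$, as your final paragraph correctly states.
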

\noindent The proof is in Appendix~\ref{app:SEs}.

Combined with closed-walk counts from Theorem~\ref{thm:structuralQuantities}, the off-diagonal estimation from Theorem~\ref{thm:neighbor_off_diagonal_filter} can be used to express counts of simple cycles:

\begin{theorem}
    \label{lem:cycle_counting}
    A DIAL-augmented message-passing model can compute the number of simple cycles of lengths 3,4 and 5 at each node.
\end{theorem}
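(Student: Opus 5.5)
We express the node-level counts of simple $3$-, $4$- and $5$-cycles as polynomial combinations of closed-walk counts $(A^k)_{vv}$, neighbor off-diagonals $(A^k)_{vw}$ for $w\in N(v)$, and quantities a standard MPNN already computes (degrees $d_v$, neighbor degrees, sums over neighbors). The first two are available in expectation by Theorem~\ref{thm:structuralQuantities} and Theorem~\ref{thm:neighbor_off_diagonal_filter}. We then argue that the downstream message-passing layers and MLPs can assemble these inputs into the counts. Because the combining formulas are nonlinear (products of estimated entries), we interpret ``compute'' as in the preceding theorems: the DIAL readouts equal the required entries in expectation, or exactly in the limit of many probes, and the subsequent network computes a fixed function of those entries.

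The key steps, in order, are as follows.

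\textbf{Triangles.} Every closed walk of length $3$ is a triangle traversed in one of two directions, so $c_3(v)=\tfrac12 (A^3)_{vv}$. This follows directly from the closed-walk readout.

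\textbf{Four-cycles.} Closed $4$-walks at $v$ that are not $4$-cycles are degenerate: they either backtrack twice on one edge or go out and back through a common neighbor. Counting these gives
\[
(A^4)_{vv} = 2c_4(v) + d_v^2 + \sum_{w\in N(v)} (d_w-1),
\]
up to the exact bookkeeping of degenerate walks. Alternatively, one can use $c_4(v)=\tfrac12\sum_{u\neq v}\binom{(A^2)_{vu}}{2}$. This form requires $(A^2)_{vu}$ for nodes at distance $2$, which the neighbor off-diagonal readout does not directly supply, so we prefer the walk-subtraction formula. It needs only $(A^4)_{vv}$, $d_v$ and $\sum_{w\in N(v)} d_w$, and the last two are obtainable by plain message passing.

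\textbf{Five-cycles.} We use the known decomposition, as in Harary--Manvel or Alon--Yuster--Zwick, of $(A^5)_{vv}$ into $2c_5(v)$ plus walks that wrap around a triangle with a pendant backtrack. The correction involves, among other terms, triangles through $v$ weighted by degrees, $\sum_{w\in N(v)}(A^2)_{vw}(d_w-2)$, and triangles through neighbors not containing $v$. The neighbor off-diagonal entries $(A^2)_{vw}$, which count triangles on edge $vw$, and $(A^3)_{ww}$ are exactly what Theorem~\ref{thm:neighbor_off_diagonal_filter} and a further message-passing step provide. These terms can therefore be aggregated over $N(v)$ by an MPNN layer.

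The main obstacle is the five-cycle step. Enumerating every degenerate closed $5$-walk type at a fixed root $v$, with the correct multiplicities, is delicate. The difficulty lies in separating walks whose triangle contains $v$, with the backtrack at $v$ or elsewhere, from walks where $v$ hangs off a triangle via a pendant edge, and then checking that each correction term is expressible from the available entries. I would first enumerate the walk shapes (triangle plus one extra edge, attached at each possible position relative to the root) and verify the resulting formula on small graphs such as $K_4$ and $C_5$ with a pendant edge. Only then would I map each term to a DIAL or MPNN quantity.
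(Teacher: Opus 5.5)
Your route is the paper's: write each count as $(A^k)_{vv}$ minus the degenerate closed walks, and supply the corrections from diagonal readouts, neighbor off-diagonal readouts, and degrees. The triangle case is fine. Your $4$-cycle identity is exact as written, so the bookkeeping caveat is unnecessary; it is the paper's $2C_4(v)=(A^4)_{vv}-d(v)^2+d(v)-\sum_{w\in N(v)}d(w)$ rearranged. (Your discarded alternative has a spurious factor $1/2$. A $4$-cycle through $v$ is determined by the vertex opposite $v$ and the two common neighbors it uses, so each cycle is counted once.) The gap is the $5$-cycle case. You name plausible ingredients but never establish the identity, and that identity is the entire content of the claim for length $5$. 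Without it, nothing shows that the correction uses only quantities DIAL and message passing can supply, with the right multiplicities.

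The missing enumeration is short. Two positions of a closed $5$-walk are cyclically at distance $1$ or $2$, so a repeated vertex forces a backtrack $x\to y\to x$. Removing it leaves a directed triangle.
\begin{itemize}
\item If the triangle $\{v,a,b\}$ contains $v$, each orientation accepts the backtrack at four positions, since the root $v$ occurs twice. This gives $2d(v)+d(a)+d(b)$ walks per orientation. A backtrack along a triangle edge is produced from both of its endpoints, giving three coincidences per orientation. Sum $4d(v)+2d(a)+2d(b)-6$ over the $(A^3)_{vv}/2$ triangles through $v$, using that each $u\in N(v)$ lies in $(A^2)_{uv}$ of them. This gives $2d(v)(A^3)_{vv}+2\sum_{u\in N(v)}d(u)(A^2)_{uv}-3(A^3)_{vv}$.
\item If the triangle avoids $v$, the walk is $v\to u\to x\to y\to u\to v$ with $u\in N(v)$. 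This contributes $\sum_{u\in N(v)}\bigl((A^3)_{uu}-2(A^2)_{uv}\bigr)=\sum_{u\in N(v)}(A^3)_{uu}-2(A^3)_{vv}$.
\end{itemize}
Hence
\[
2C_5(v)=(A^5)_{vv}-2d(v)(A^3)_{vv}-2\sum_{u\in N(v)}d(u)(A^2)_{uv}-\sum_{u\in N(v)}(A^3)_{uu}+5(A^3)_{vv}.
\]
This is consistent with your $\sum_{w}(A^2)_{vw}(d_w-2)$ term via $\sum_{u\in N(v)}(A^2)_{uv}=(A^3)_{vv}$.

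Each term is a single DIAL estimate times a deterministic degree factor:
\begin{itemize}
\item $(A^5)_{vv}$ and $(A^3)_{vv}$ come from the diagonal readout, which requires $K\ge 6$.
\item $\sum_{u\in N(v)}(A^3)_{uu}$ comes from summing the neighbors' readouts.
\item $\sum_{u\in N(v)}d(u)(A^2)_{uv}$ comes from letting each $u$ send $d(u)z_u$, so that $v$ forms $(A^2\mathbf z)_v\sum_{u\in N(v)}d(u)z_u$. The bare multiset of Theorem~\ref{thm:neighbor_off_diagonal_filter} would lose the pairing with $d(u)$.
\end{itemize}
The final expression is therefore linear in the estimates and exactly unbiased. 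Your caveat about products of estimated entries, and the appeal to the many-probe limit, are unnecessary.
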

The proof of Theorem~\ref{lem:cycle_counting} is given in Section~\ref{app:cycle_counting_proofs}.

\section{Experiments}

We present empirical results in three main settings: molecular property prediction on ZINC \cite{irwin2012zinc, dwivedi2023benchmarking}, synthetic node-level cycle counting \cite{huang2022boosting}, and directed structural pattern detection on synthetic multigraphs \cite{egressy2024provably}.
These experiments test the usefulness of DIAL as a lightweight structural encoding, and its ability to recover explicitly structural information in controlled settings. 

Our comparisons include standard message-passing GNNs, hand-crafted cycle features, spectral positional encodings, PEARL and other random-probe baselines, as well as subgraph and higher-order architectures for cycle counting.
Random Feature Propagation (RFP) \cite{eliasof2023graph} is a closely related baseline, but we omit it from the experiments because no public implementation was available.\footnote{Since no public implementation was available, we implemented RFP based on the paper. However, the paper reports the hyperparameter search space rather than the selected settings, and our implementation did not match the reported ZINC performance closely enough to support a reliable comparison.}

\subsection{Setup}

We use the PEARL codebase \citep{kanatsoulis2025learning} as the starting point, modifying the positional-encoding module for DIAL. In the DIAL variants reported here, Hutchinson correlations are applied at two levels inside the structural encoding module: first after explicit graph-shift propagation of the random probes, and then again after a number of learned GIN layers. The outputs of each layer are concatenated and projected to the final node encoding. Full experimental details can be found in Appendix~\ref{app:experimental_details}.

\subsection{Results}

Table~\ref{tab:zinc_logp} reports a PE-focused comparison on ZINC, with logP prediction evaluated by test MAE.
The non-DIAL entries follow the baseline set reported by \citet{kanatsoulis2025learning}. Hence, the table should be read as a comparison to positional-encoding methods rather than as a comprehensive ZINC leaderboard.
DIAL performs comparably to strong spectral and random-probe positional encoding methods.

\begin{table}[t]
    \caption{Test MAE for logP prediction on ZINC; lower is better. For SignNet, BasisNet, and SPE, ``8 smallest'' uses the eigenvectors corresponding to the eight smallest eigenvalues of the normalized Laplacian, whereas ``full'' uses the complete eigenbasis. Best within each block is bolded.}
    \label{tab:zinc_logp}
    \centering
      \begin{tabular}{lc}
        \toprule
        \textbf{Method} & \textbf{Test MAE} \\
        \midrule
        \text{GCN} \cite{kipf2016semi}  & $0.469 \pm 0.002$ \\
        \text{GIN} \cite{xu2018powerful} & $0.209 \pm 0.018$  \\
        \text{GSN with cycles} \cite{bouritsas2022improving} & $\boldsymbol{0.115 \pm 0.012}$ \\
        \text{GIN + random ID} \cite{xu2018powerful} & $0.279 \pm 0.023$ \\
    
    \midrule
        
        SignNet (8 smallest) \cite{lim2022sign} & $0.1034 \pm 0.0056$ \\
        SignNet (full) \cite{lim2022sign} & $0.0853 \pm 0.0026$ \\
        BasisNet (8 smallest) \cite{lim2022sign} & $0.1554 \pm 0.0048$ \\
        BasisNet (full) \cite{lim2022sign} & $0.1555 \pm 0.0124$ \\
        SPE (8 smallest) \cite{huang2024stability} & $0.0736 \pm 0.0007$ \\
        SPE (full) \cite{huang2024stability} & $0.0693 \pm 0.0040$ \\
        R-PEARL \cite{kanatsoulis2025learning} & $0.0699 \pm 0.002$ \\
        B-PEARL \cite{kanatsoulis2025learning} & $\boldsymbol{0.0644 \pm 0.001}$ \\
        \midrule
        DIAL ($K=1$)  & $0.0703 \pm 0.0012$ \\
        DIAL ($K=4$) & $\boldsymbol{0.0658 \pm 0.0020}$ \\
        DIAL ($K=8$) & $0.0733 \pm 0.0026$ \\
        DIAL ($K=12$) & $0.0752 \pm 0.0074$ \\
    
        \bottomrule
    \end{tabular}
\end{table}

\begin{table}[!htbp]
    \caption{Normalized MAE for node-level cycle counting. Complexities assume fixed hidden width; $N$ is the number of nodes, $\bar d$ the average degree, $s$ the maximum extracted subgraph size, $r$ the number of probes, $L$ the number of learned layers or blocks, and $K$ the propagation depth. Inference space excludes graph storage. The lower block contains the random-probe methods; best results are bolded separately in the final two blocks. R-PEARL and DIAL are reported as mean $\pm$ standard deviation.}
    \label{tab:cycle_counting}
    \vspace{4pt}
    \centering
    \setlength{\tabcolsep}{2pt}
    
    \begin{tabular}{lcccccc}
    \toprule
    \multirow{2}{*}{Method} & \multicolumn{2}{c}{Complexity} & \multicolumn{4}{c}{Synthetic (norm. MAE)} \\ 
    \cmidrule(r){2-3} \cmidrule(r){4-7}
    & Space & Time & 3-Cycle     & 4-Cycle  & 5-Cycle & 6-Cycle  \\ 
    \midrule
    Base GNN  & $O(N)$ & $O(LN\bar d)$ & 0.3515 & 0.2742   &  0.2088   & 0.1555 \\
    \midrule 
    ID-GNN \cite{you2021identity} & $O(Ns)$ & $O(LNs\bar d)$ & 0.0006 & 0.0022 & 0.0490 & 0.0495 \\
    NGNN \cite{zhang2021nested} & $O(Ns)$ & $O(LNs\bar d)$ & \textbf{0.0003}  & 0.0013 & 0.0402 & 0.0439 \\
    GNN-AK+ \cite{zhao2021stars}  & $O(Ns)$ & $O(LNs\bar d)$ & 0.0004 & 0.0041  & 0.0133  & 0.0238 \\
    PPGN \cite{maron2019provably}  & $O(N^2)$ & $O(LN^3)$ & \textbf{0.0003}    & \textbf{0.0009}  & 0.0036 & \textbf{0.0071} \\
    I$^2$-GNN \cite{huang2022boosting}  & $O(Ns\bar d)$ & $O(LNs\bar d^2)$ & \textbf{0.0003}  & 0.0016   & \textbf{0.0028} & 0.0082 \\
    2-DRFWL(2) \cite{zhou2023distance} & $O(N\bar d^2)$ & $O(LN\bar d^4)$ & 0.0004 & 0.0015 & 0.0034 & 0.0087 \\
    \midrule
    R-PEARL \cite{kanatsoulis2025learning} & $O(rN)$ & $O((K+L)rN\bar d)$
    & {\tiny $.0118\!\pm\!.0037$}
    & {\tiny $.0360\!\pm\!.0054$}
    & {\tiny $.0561\!\pm\!.0035$}
    & {\tiny $.0622\!\pm\!.0025$} \\
    DIAL (Ours) & $O(KrN)$ & $O((K+L)rN\bar d)$
    & {\tiny $\boldsymbol{.0041\!\pm\!.0001}$}
    & {\tiny $\boldsymbol{.0185\!\pm\!.0027}$}
    & {\tiny $\boldsymbol{.0271\!\pm\!.0005}$}
    & {\tiny $\boldsymbol{.0348\!\pm\!.0030}$} \\
    \bottomrule
    \end{tabular}
\end{table}

Table~\ref{tab:cycle_counting} compares DIAL with several baselines on node-level cycle counting. Methods designed specifically to expose cycle information, such as subgraph and higher-order architectures, achieve the lowest errors, but require substantially higher space or time complexity. Among random-probe methods, DIAL improves over R-PEARL across all cycle lengths while retaining linear scaling in $N$, $\bar d$, and $r$.

We further evaluate DIAL on a directed synthetic pattern-detection task introduced by \citet{egressy2024provably}. See Table~\ref{tab:directed_pattern_detection} for the results. The task is node-level multi-label classification: nodes are labeled by whether they participate in directed cycles of length $2$--$6$, scatter-gather motifs (S-G), or bicliques (B-C), following \citet{altman2023realistic}. DIAL outperforms R-PEARL on every directed pattern. The improvement is especially large for short directed cycles. Overall, DIAL raises macro F1 from 52.24 to 77.84. This suggests that correlating propagated representations with the original probes provides a substantially stronger signal for directed structural patterns.

\begin{table*}[!htbp]
    \caption{Pattern detection on synthetic directed graphs. We report minority-class F1 scores in percent for node-level binary labels; higher is better. The detected patterns include directed cycles of length $2$--$6$, scatter-gather motifs (S-G), and bicliques (B-C). Both methods use $K=4$ and six learned probe-aggregation layers. Best in each column is bolded.}
    \label{tab:directed_pattern_detection}
    \centering
    \resizebox{\textwidth}{!}{%
    \begin{tabular}{lcccccccc}
        \toprule
        \textbf{Method} %
        & \textbf{2-Cycle} & \textbf{3-Cycle} & \textbf{4-Cycle} & \textbf{5-Cycle} & \textbf{6-Cycle} & \textbf{S-G} & \textbf{B-C} & \textbf{Macro F1} \\
        \midrule
        R-PEARL \cite{kanatsoulis2025learning} %
        & $15.48 \pm 7.59$ & $41.21 \pm 5.42$ & $66.05 \pm 0.52$ & $55.13 \pm 0.80$ & $54.74 \pm 0.60$ & $70.54 \pm 1.09$ & $62.56 \pm 2.49$ & $52.24$ \\
        DIAL (Ours) %
        & $\boldsymbol{91.07 \pm 3.33}$ & $\boldsymbol{82.44 \pm 4.48}$ & $\boldsymbol{79.72 \pm 1.50}$ & $\boldsymbol{72.10 \pm 2.21}$ & $\boldsymbol{74.83 \pm 2.95}$ & $\boldsymbol{78.68 \pm 2.79}$ & $\boldsymbol{66.07 \pm 1.48}$ & $\boldsymbol{77.84}$ \\
        \bottomrule 
    \end{tabular}
    }
\end{table*}

\clearpage
\section{Conclusions}

We introduce DIAL, a message-passing layer that uses randomized probing to give nodes access to structural self-interactions expressed as the diagonals of graph-derived operators. We showed that DIAL can express or approximate closed-walk and short-cycle counts, graph angles, subgraph centrality, and heat kernel signatures. In our experiments, DIAL is competitive with strong positional-encoding methods on ZINC. It also outperforms comparable random-probe methods on undirected cycle counting and directed pattern detection. Overall, DIAL provides a simple way to improve structural expressivity without using higher-order architectures or a fixed set of motifs.

\section*{Acknowledgements}

The support of the Swiss National Science Foundation (project number: 212158) for this work is gratefully acknowledged.

\bibliographystyle{unsrtnat}
\bibliography{ref}

\appendix

\section{Proofs for Section~\ref{sec:SEs}}
\label{app:SEs}

\diagSpectralFilter*
\begin{proof}
    Let $p(S)=\sum_{k=0}^{K-1} a_k S^k$ be the target polynomial filter.
    Since the linear propagation part of Equation~(\ref{eq:generalized_GNN}) represents polynomial graph filters of degree $K-1$, the DIAL probe-propagation weights can learn the coefficients of $p$.
    Thus, for a Rademacher probe $\mathbf z$, the learned probe propagation can produce
    \[
        \mathbf p = p(S)\mathbf z.
    \]
    The DIAL readout then forms the node-wise correlation
    \[
        \mathbf z \odot \mathbf p
        =
        \mathbf z \odot p(S)\mathbf z.
    \]
    Applying Lemma~\ref{lem:hutchinson_diagonal_estimator} with $M=p(S)$ gives
    \[
        \E\!\left[\mathbf z \odot p(S)\mathbf z\right]
        =
        \diag(p(S)).
    \]
    Hence node $v$ obtains $(p(S))_{vv}$ in expectation.

    For the equivalent spectral expression, write $S=\Phi \Lambda \Phi^T$.
    Then $p(S)=\Phi p(\Lambda)\Phi^T$, and therefore
    \[
        (p(S))_{vv}
        =
        \sum_{i=1}^n p(\lambda_i)(\boldsymbol\phi_i(v))^2.
    \]
    Since $p$ is an arbitrary polynomial of degree at most $K-1$, this gives the stated expression for any polynomial response $g$ of that degree.
    Multiple polynomial filters can be handled in parallel by using separate output channels for $p_1,\dots,p_d$ and applying the same argument channel-wise.
\end{proof}

\neighborOffDiagonalFilter*
\begin{proof}
    Let $M=p(S)$.
    Since the linear propagation part of Equation~(\ref{eq:generalized_GNN}) represents polynomial graph filters of degree $K-1$, the DIAL probe-propagation weights can learn the coefficients of $p$.
    Thus, after the corresponding probe-propagation layer, the probe feature is
    \[
        \mathbf p^{(\ell)} = M\mathbf z.
    \]
    For the readout, each node keeps its original probe value $z$ alongside the propagated state $\mathbf p^{(\ell)}=M\mathbf z$.
    A subsequent GIN layer allows node $v$ to receive the multiset of probe values $z_w$ of its neighbors while retaining $p^{(\ell)}_v$.
    Node $v$ can therefore form $z_w p^{(\ell)}_v$ for every $w\in N(v)$, and permutation-invariant aggregation encodes the resulting unordered collection
    \[
        R_v^{(\ell)}
        =
        \{\!\{z_w p^{(\ell)}_v : w\in N(v)\}\!\}.
    \]
    To analyze this, temporarily retain the identity of the sending neighbor and write $r_{vw}^{(\ell)}=z_w p^{(\ell)}_v$.
    Since $p^{(\ell)}_v=\sum_j M_{vj}z_j$, we have
    \[
        \E[r_{vw}^{(\ell)}]
        =
        \E\left[\sum_j M_{vj} z_w z_j\right]
        =
        \sum_j M_{vj}\E[z_w z_j]
        =
        M_{vw},
    \]
    because $\E[z_w z_j]=0$ for $j \neq w$ and $\E[z_w^2]=1$.
    Therefore $R_v^{(\ell)}$ estimates the multiset $\{\!\{M_{vw}: w\in N(v)\}\!\}$ in expectation.
    Substituting $M=p(S)$ gives the claim.
\end{proof}

\subsection{Estimator Variance and Concentration}
\label{app:estimator_variance}

Lemma~\ref{lem:hutchinson_diagonal_estimator} establishes unbiasedness for diagonal entries, and Theorem~\ref{thm:neighbor_off_diagonal_filter} extends this to neighbor off-diagonal entries. Here, we compute the variance of these estimators in terms of the number of probes.

\begin{lemma}
    \label{lem:matrix_entry_variance}
    Let $M\in\RR^{n\times n}$ be fixed, and let $\mathbf z^{(1)},\ldots,\mathbf z^{(r)}$ be independent Rademacher probes. For fixed nodes $v,w$, define
    \[
        \widehat M_{vw}
        =
        \frac{1}{r}\sum_{s=1}^r
        z_w^{(s)}(M\mathbf z^{(s)})_v.
    \]
    The variance of this estimator is
    \begin{equation}
        \label{eq:estimator_variance}
        \Var(\widehat M_{vw})
        =
        \frac{1}{r}\sum_{j\neq w}M_{vj}^2
        =
        \frac{\lVert M_{v,:}\rVert_2^2-M_{vw}^2}{r}.
    \end{equation}
    If $M$ is symmetric, the diagonal case simplifies to
    \[
        \Var(\widehat M_{vv})
        =
        \frac{(M^2)_{vv}-M_{vv}^2}{r}.
    \]
\end{lemma}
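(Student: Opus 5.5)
The proof reduces to a single-probe variance computation followed by averaging over independent probes. Write $X_s = z_w^{(s)}(M\mathbf z^{(s)})_v$, so that $\widehat M_{vw}=\frac1r\sum_s X_s$. Each $X_s$ depends only on $\mathbf z^{(s)}$, and the probes are independent. The $X_s$ are therefore i.i.d., which gives $\Var(\widehat M_{vw})=\frac1r\Var(X_1)$. It remains to compute $\Var(X_1)$, and I will drop the superscript.

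I will follow the same route as the variance computation in Lemma~\ref{lem:hutchinson_diagonal_estimator}, now with $w$ possibly different from $v$. Expanding gives $X=\sum_j M_{vj} z_w z_j$. The mean $\E[X]=M_{vw}$ was already established in the proof of Theorem~\ref{thm:neighbor_off_diagonal_filter}. For the second moment I write
\[
X^2=\sum_{j,k}M_{vj}M_{vk}z_w^2 z_jz_k=\sum_{j,k}M_{vj}M_{vk}z_jz_k ,
\]
using $z_w^2=1$. This is the key simplification: it eliminates any fourth-moment bookkeeping involving $z_w$. Since $\E[z_jz_k]=\mathbf 1[j=k]$, we get $\E[X^2]=\sum_j M_{vj}^2=\lVert M_{v,:}\rVert_2^2$. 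Hence
\[
\Var(X)=\lVert M_{v,:}\rVert_2^2-M_{vw}^2=\sum_{j\neq w}M_{vj}^2 .
\]
Dividing by $r$ yields \eqref{eq:estimator_variance}.

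For the symmetric diagonal case, set $w=v$. Symmetry gives $\lVert M_{v,:}\rVert_2^2=\sum_j M_{vj}M_{jv}=(M^2)_{vv}$, which is the stated formula.

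I do not expect any step to be a genuine obstacle. The only points needing care are that the cross terms vanish even when $j$ or $k$ equals $w$ (the $z_w^2=1$ reduction handles this) and that independence across probes is used to justify the $1/r$ factor.
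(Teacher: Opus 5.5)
Your proof is correct and follows essentially the same route as the paper. For a single probe you use $z_w^2=1$ and $\E[z_jz_k]=0$ for $j\neq k$, and then independence across probes gives the $1/r$ factor. The only cosmetic difference is that the paper first writes $X = M_{vw} + \sum_{j\neq w} M_{vj} z_w z_j$ and takes the second moment of the zero-mean remainder directly. You instead compute $\E[X^2]-(\E[X])^2$, as in the proof of Lemma~\ref{lem:hutchinson_diagonal_estimator}.
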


\begin{proof}
    For one probe, write $X_{vw}=z_w(M\mathbf z)_v$ and separate the desired entry from the remaining terms:
    \begin{align*}
        X_{vw}
        &=
        z_w\sum_j M_{vj}z_j \\
        &=
        M_{vw}z_w^2+\sum_{j\neq w}M_{vj}z_wz_j \\
        &=
        M_{vw}+\sum_{j\neq w}M_{vj}z_wz_j.
    \end{align*}
    Thus, the desired entry $M_{vw}$ is recovered as a deterministic term and does not contribute to the variance. Every other entry in row $v$ contributes a zero-mean random term. Expanding the square of these remaining terms gives
    \begin{align*}
        \Var(X_{vw})
        &=
        \E\Big[\Big(\sum_{j\neq w}M_{vj}z_wz_j\Big)
                   \Big(\sum_{k\neq w}M_{vk}z_wz_k\Big)\Big] \\
        &=
        \sum_{j,k\neq w}M_{vj}M_{vk}\E[z_jz_k]
        =
        \sum_{j\neq w}M_{vj}^2,
    \end{align*}
    since the cross terms vanish. Finally, independence across probes gives
    \[
        \Var(\widehat M_{vw})
        =
        \frac{1}{r^2}\sum_{s=1}^r\Var(X_{vw}^{(s)})
        =
        \frac{1}{r}\sum_{j\neq w}M_{vj}^2.
    \]
    The row-norm expression follows directly. If $M$ is symmetric, then $\lVert M_{v,:}\rVert_2^2=(M^2)_{vv}$.
\end{proof}

\begin{corollary}
    \label{cor:structural_estimator_variances}
    For $r$ independent probes, the random-probe estimators satisfy the following:
    \begin{itemize}
        \item \textbf{Closed walks and triangles.}
        On an undirected graph,
        \[
            \Var\!\left(\widehat{(A^k)_{vv}}\right)
            =
            \frac{(A^{2k})_{vv}-(A^k)_{vv}^2}{r}.
        \]
        In particular, for a simple graph without self-loops, $C_3(v)=(A^3)_{vv}/2$, so
        \[
            \Var\!\left(\widehat C_3(v)\right)
            =
            \frac{(A^6)_{vv}-(A^3)_{vv}^2}{4r}.
        \]

        \item \textbf{Graph angles.}
        Let $P_\mu$ be the orthogonal projector onto the eigenspace associated with $\mu$, and assume it is represented by a polynomial filter available to DIAL, as in Theorem~\ref{thm:structuralQuantities}. Then
        \[
            \Var\!\left(\widehat{(P_\mu)_{vv}}\right)
            =
            \frac{\angle_\mu(v)(1-\angle_\mu(v))}{r}
            \leq
            \frac{1}{4r}.
        \]

        \item \textbf{Subgraph centrality and heat kernel signature.}
        Define the polynomial filters
        \[
            p_{K-1}(A)=\sum_{\ell=0}^{K-1}\frac{A^\ell}{\ell!},
            \qquad
            h_{t,K-1}(L)=\sum_{\ell=0}^{K-1}\frac{(-tL)^\ell}{\ell!}.
        \]
        Their estimator variances are
        \begin{align*}
            \Var\!\left(\widehat{\SC}_K(v)\right)
            &=
            \frac{(p_{K-1}(A)^2)_{vv}-(p_{K-1}(A)_{vv})^2}{r}, \\
            \Var\!\left(\widehat{\mathrm{HKS}}_{t,K}(v)\right)
            &=
            \frac{(h_{t,K-1}(L)^2)_{vv}-(h_{t,K-1}(L)_{vv})^2}{r}.
        \end{align*}
    \end{itemize}
\end{corollary}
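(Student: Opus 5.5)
Each bullet is a specialization of Lemma~\ref{lem:matrix_entry_variance} to the case $w=v$ with a symmetric matrix $M$, so the plan is to identify the relevant $M$ in each case and evaluate $(M^2)_{vv}-M_{vv}^2$. First, I would note that the DIAL estimator of each quantity is exactly $\widehat M_{vv}=\frac1r\sum_s z^{(s)}_v(M\mathbf z^{(s)})_v$ for a polynomial filter $M=p(S)$, as constructed in the proof of Theorem~\ref{thm:diagonal_of_spectral_filter}. Then, since $S$ is symmetric, every polynomial in $S$ is symmetric, and the diagonal formula $\Var(\widehat M_{vv})=((M^2)_{vv}-M_{vv}^2)/r$ applies directly.

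\textbf{Closed walks.} Take $M=A^k$, so $M^2=A^{2k}$, which gives the first formula. For triangles, the estimator is $\widehat C_3(v)=\tfrac12\widehat{(A^3)_{vv}}$, using $(A^3)_{vv}=2C_3(v)$ from Lemma~\ref{lem:walk_counting} on simple graphs. Scaling by $1/2$ multiplies the variance by $1/4$, which gives the stated expression.

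\textbf{Graph angles.} Take $M=P_\mu$, which is represented by a polynomial in $L$ by assumption, as in Theorem~\ref{thm:structuralQuantities}. The key fact is idempotence, $P_\mu^2=P_\mu$, so $(M^2)_{vv}=(P_\mu)_{vv}=\angle_\mu(v)$. The variance is therefore $\angle_\mu(v)-\angle_\mu(v)^2$. Since $\angle_\mu(v)=\|P_\mu\mathbf e_v\|^2\in[0,1]$, the bound $x(1-x)\le 1/4$ finishes this case.

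\textbf{Subgraph centrality and heat kernel signature.} Take $M=p_{K-1}(A)$ and $M=h_{t,K-1}(L)$, respectively. These are exactly the truncated Taylor filters whose Hutchinson estimates define $\widehat{\SC}_K$ and $\widehat{\mathrm{HKS}}_{t,K}$ in Theorem~\ref{cor:approx_structural_quantities}. Both are symmetric, so substituting into the diagonal formula gives the claim verbatim.

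The only point needing care is checking that the estimators in Theorems~\ref{thm:structuralQuantities} and~\ref{cor:approx_structural_quantities} are exactly $\widehat M_{vv}$ for these $M$, with no additional learned nonlinearity. I would make this explicit by appealing to the linear-propagation construction in the proof of Theorem~\ref{thm:diagonal_of_spectral_filter}. Everything else is direct substitution.
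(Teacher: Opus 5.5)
Your proposal is correct and matches the paper's proof: substitute $M=A^k$, $M=P_\mu$ (using $P_\mu^2=P_\mu$), $M=p_{K-1}(A)$ and $M=h_{t,K-1}(L)$ into the symmetric diagonal case of Lemma~\ref{lem:matrix_entry_variance}, and scale by $1/2$ for triangles. You also spell out the bound $\angle_\mu(v)(1-\angle_\mu(v))\le 1/4$ from $\angle_\mu(v)=\|P_\mu\mathbf e_v\|^2\in[0,1]$, which the paper leaves implicit.
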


\begin{proof}
    For closed walks, set $M=A^k$ in Lemma~\ref{lem:matrix_entry_variance} and use $(A^k)^2=A^{2k}$. Scaling the resulting estimate by $1/2$ gives the triangle-count variance. For graph angles, set $M=P_\mu$ and use $P_\mu^2=P_\mu$. Finally, substituting $M=p_{K-1}(A)$ and $M=h_{t,K-1}(L)$ gives the subgraph-centrality and heat-kernel formulas.
\end{proof}

\paragraph{Combining estimates.}
Lemma~\ref{lem:matrix_entry_variance} also gives the variance of each randomized matrix-entry estimate used in the longer-cycle constructions. When several estimates formed from the same probes are combined (as in Theorem~\ref{lem:cycle_counting}), the variance of the final quantity additionally depends on their covariance.

\paragraph{Concentration.}
Let $\sigma_{vw}^2=\sum_{j\neq w}M_{vj}^2$. The products $\{z_wz_j:j\neq w\}$ are jointly independent Rademacher variables, so for every $\epsilon>0$,
\[
    \pr\!\left(\left|\widehat M_{vw}-M_{vw}\right|\geq\epsilon\right)
    \leq
    2\exp\!\left(-\frac{r\epsilon^2}{2\sigma_{vw}^2}\right).
\]

\subsection{Structural Quantities}

The next proofs use a Lagrange polynomial, which is a construction for polynomials of minimal degree interpolating a given set of points:

\begin{lemma}[Lagrange interpolation polynomials]
    \label{lem:Lagrange_interp}
    Given a set of $k+1$ points $\{(x_1, y_1), \dots, (x_{k+1}, y_{k+1})\}$, where $x_i \neq x_j$, there is a polynomial $L(x)$ of degree $\le k$ such that $L(x_i) = y_i$ for all $1 \le i \le k+1$.
\end{lemma}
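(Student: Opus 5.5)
The statement is the classical Lagrange interpolation fact, and I plan to prove it by the standard explicit construction: write down candidate basis polynomials and check their degree and interpolation property. For each $i \in \{1,\dots,k+1\}$ I will define
\[
    \ell_i(x) = \prod_{j \neq i} \frac{x - x_j}{x_i - x_j}.
\]
This is well defined because the nodes are pairwise distinct, so every denominator $x_i - x_j$ with $j\neq i$ is nonzero. It is a product of exactly $k$ linear factors, so $\deg \ell_i = k$.

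Next I will verify the Kronecker property $\ell_i(x_m)=\delta_{im}$.

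\begin{itemize}
    \item If $m\neq i$, the factor with $j=m$ vanishes at $x=x_m$, so $\ell_i(x_m)=0$.
    \item If $m=i$, every factor equals $(x_i-x_j)/(x_i-x_j)=1$, so $\ell_i(x_i)=1$.
\end{itemize}

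I will then set
\[
    L(x)=\sum_{i=1}^{k+1} y_i\,\ell_i(x).
\]
As a linear combination of polynomials of degree at most $k$, $L$ has degree at most $k$. Some cancellation may lower the degree, which is why the statement says $\le k$. Evaluating at a node gives $L(x_m)=\sum_i y_i\delta_{im}=y_m$ for every $m$, which is the claim.

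None of the steps is really an obstacle. The only point needing care is that distinctness of the $x_i$ is exactly what makes the denominators nonzero. Existence is all the lemma asserts, but if uniqueness were wanted it follows quickly. The difference of two interpolants of degree at most $k$ has $k+1$ roots, so it must be the zero polynomial.

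In later use, I expect this lemma to be applied with the $x_i$ taken as the distinct eigenvalues $\mu$ of $L$ and the $y_i$ chosen as indicators. The resulting polynomial $q$ then satisfies $q(L)=\sum_\nu q(\nu)P_\nu=P_\mu$, which yields the graph-angle part of Theorem~\ref{thm:structuralQuantities} whenever $L$ has at most $K$ distinct eigenvalues.
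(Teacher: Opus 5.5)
Your proof is correct and follows the paper's own argument: the same basis polynomials $\ell_i(x)=\prod_{j\neq i}(x-x_j)/(x_i-x_j)$ with $\ell_i(x_m)=\delta_{im}$, combined as $L(x)=\sum_i y_i\ell_i(x)$. Your explicit note that distinct nodes make the denominators nonzero, and your uniqueness remark, are small additions that the paper leaves implicit.
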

\begin{proof}
    For each $i$, construct a polynomial $\ell_i(x)$ such that $\ell_i(x_i)=1$ and $\ell_i(x_j)=0$ for all $j \neq i$. Such a polynomial is given by
    $$
        \ell_i(x)
        = \prod_{j \neq i: 1 \le j \le k+1} \frac{x-x_j}{x_i-x_j}
    $$
    since all factors are equal to one for $x=x_i$, and for $x=x_j, j \neq i$, $(x-x_j)/(x_i-x_j)=0$. Note that the degree of $\ell_i(x)$ is $k$. The polynomial $L(x)$ is given by $\sum_{i=1}^{k+1} y_i \ell_i(x)$.
\end{proof}

\structuralQuantities*
\begin{proof}
    \textbf{Closed walks:} With the adjacency matrix $A$ as the GSO, $(A^k)_{vv}$ gives the number of closed walks of length $k$ starting and ending at $v$ (Lemma~\ref{lem:walk_counting}). This is the diagonal of the polynomial filter $p(A)=A^k$, which has degree $k$. Theorem~\ref{thm:diagonal_of_spectral_filter} gives access to $(A^k)_{vv}$ in expectation.

    \textbf{Graph angles:} Use the Laplacian $L$ as the GSO, and let $\eta_1,\dots,\eta_m$ be the distinct eigenvalues of $L$.
    Fix an eigenvalue $\mu\in\{\eta_1,\dots,\eta_m\}$.
    Let $P_\mu=\Phi_\mu\Phi_\mu^T$ be the projector onto the eigenspace $E_\mu$, so that the graph angle is $\angle_\mu(v)=(P_\mu)_{vv}$.
    By Lemma~\ref{lem:Lagrange_interp}, there is a polynomial $p_\mu$ of degree at most $m-1$ such that
    $$
        p_\mu(\eta) = \begin{cases}
            1 & \eta = \mu \\
            0 & \eta \in \{\eta_1,\dots,\eta_m\}\setminus\{\mu\}.
        \end{cases}
    $$
    With this choice of polynomial,
    \[
        p_\mu(L)
        =
        \Phi p_\mu(\Lambda)\Phi^T
        =
        \sum_{i\in I_\mu}\boldsymbol\phi_i\boldsymbol\phi_i^T
        =
        P_\mu.
    \]
    If the Laplacian has at most $K$ distinct eigenvalues, then $\deg(p_\mu)\le m-1\le K-1$.
    Theorem~\ref{thm:diagonal_of_spectral_filter} then gives access to $\diag(P_\mu)$ in expectation, which is exactly the vector of graph angles for $E_\mu$.
\end{proof}
Note that for graph angles, this theorem is an exact-recovery result. Since an $n$-node graph may have up to $n$ distinct Laplacian eigenvalues, the interpolation construction may require a large $K$. 

\paragraph{Subgraph centrality and heat kernel signature.}
For completeness, the subgraph centrality of node $v$ is
\[
    \SC(v)=(e^A)_{vv}
    =\sum_{k=0}^{\infty}\frac{(A^k)_{vv}}{k!},
\]
and thus measures participation in closed walks of all lengths, with longer walks receiving smaller weights.
Similarly, if $L=\Phi\Lambda\Phi^T$, the heat kernel signature is
\[
    \mathrm{HKS}_t(v)=(e^{-tL})_{vv}
    =\sum_{k=0}^{\infty}\frac{(-t)^k}{k!}(L^k)_{vv}
    =\sum_i e^{-t\lambda_i}(\boldsymbol\phi_i(v))^2.
\]
It gives the heat at $v$ at time $t$ when one unit of heat is initially placed at $v$.

\approxStructuralQuantities*
\begin{proof}
    For subgraph centrality, use the degree-$(K-1)$ Taylor polynomial
    \[
        p_{K-1}(x)=\sum_{r=0}^{K-1}\frac{x^r}{r!}.
    \]
    By Theorem~\ref{thm:diagonal_of_spectral_filter}, DIAL can compute an estimate $\widehat{\SC}_K(v)$ such that
    \[
        \E[\widehat{\SC}_K(v)] = (p_{K-1}(A))_{vv}.
    \]
    Since $A$ is symmetric, write $A=\Phi\Lambda\Phi^T$.
    Then
    \[
        \|e^A-p_{K-1}(A)\|_2
        =
        \max_i |e^{\lambda_i}-p_{K-1}(\lambda_i)|.
    \]
    Taylor's theorem gives
    \[
        |e^\lambda-p_{K-1}(\lambda)|
        \le
        \frac{e^\alpha \alpha^K}{K!}
        \qquad \text{for all } |\lambda|\le \alpha.
    \]
    Therefore
    \[
        |\E[\widehat{\SC}_K(v)]-\SC(v)|
        =
        |(p_{K-1}(A)-e^A)_{vv}|
        \le
        \|p_{K-1}(A)-e^A\|_2
        \le
        \frac{e^\alpha \alpha^K}{K!}.
    \]

    For the heat kernel signature at time $t$, use
    \[
        h_{t,K-1}(x)=\sum_{r=0}^{K-1}\frac{(-tx)^r}{r!}.
    \]
    Theorem~\ref{thm:diagonal_of_spectral_filter} gives an estimate $\widehat{\mathrm{HKS}}_{t,K}(v)$ satisfying
    \[
        \E[\widehat{\mathrm{HKS}}_{t,K}(v)]
        =
        (h_{t,K-1}(L))_{vv}.
    \]
    Since $L$ is symmetric positive semidefinite, its eigenvalues satisfy $0\le \lambda_i\le \beta$.
    Taylor's theorem applied to $e^x$ on $x\in[-t\beta,0]$ gives
    \[
        |e^{-t\lambda}-h_{t,K-1}(\lambda)|
        \le
        \frac{(t\beta)^K}{K!}
        \qquad \text{for all } 0\le \lambda\le \beta.
    \]
    As above,
    \[
        |\E[\widehat{\mathrm{HKS}}_{t,K}(v)]-\mathrm{HKS}_t(v)|
        \le
        \|h_{t,K-1}(L)-e^{-tL}\|_2
        \le
        \frac{(t\beta)^K}{K!}.
    \]
\end{proof}

\subsection{Simulating BasisNet}

BasisNet \cite{lim2022sign} computes functions of the projection matrices $P_{\mu_i} = \Phi_{\mu_i} \Phi_{\mu_i}^T$. Permutation equivariance requires that any such function $\varphi$ satisfy
$$\varphi(\Pi \Phi\Phi^T \Pi^T) = \Pi \varphi(\Phi\Phi^T)$$
for any permutation matrix $\Pi$, since a permutation of the nodes permutes both the rows and columns of the projection matrix. The class of functions with this property is characterized in \cite{pmlr-v97-maron19a}, and BasisNet instantiates $\varphi$ using the Invariant Graph Network (IGN) \cite{maron2018invariant}.

IGNs are a family of GNNs that operate on tensors of order $k$ representing $k$-tuples of nodes. \citet{maron2018invariant} characterize all linear equivariant layers between such tensors. In the setting of BasisNet, only a \textit{mixed-order equivariant layer} is needed, mapping a matrix $X \in \RR^{n \times n}$ to node features $Y \in \RR^{n \times d}$. The following operations can be shown to span the space of all linear layers of this type.

\begin{itemize}
    \item The diagonal operation: $Y_i = X_{ii}$
    \item Row-wise sum: $Y_i = \sum_{j=1}^n X_{ij}$
    \item Column-wise sum: $Y_i = \sum_{j=1}^n X_{ji}$
    \item Sum of all entries: $Y_i = \sum_{j=1}^n \sum_{k=1}^n X_{jk}$
    \item Sum of diagonal entries: $Y_i = \sum_{j=1}^n X_{jj}$
\end{itemize}

\begin{lemma}
    DIAL with a virtual node can simulate BasisNet.
\end{lemma}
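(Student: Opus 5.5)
The plan is to show that each of the five basis operations of a mixed-order equivariant layer (listed above), applied to the projector $X=P_{\mu}$, can be produced in expectation by DIAL together with a virtual node. The following composition steps (MLPs over the resulting node features, IGN-style aggregation across eigenspaces) are then ordinary node-level computations. By linearity of expectation, any linear combination of the five operations is also obtained in expectation. As in Theorem~\ref{thm:structuralQuantities}, we assume each $P_\mu$ is expressible as a polynomial $p_\mu(S)$ of degree at most $K-1$, via the Lagrange construction of Lemma~\ref{lem:Lagrange_interp}. DIAL's probe propagation can then output $\mathbf p=P_\mu\mathbf z$ in one channel per eigenspace, using the parallel-channel statement of Theorem~\ref{thm:diagonal_of_spectral_filter}.

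The five operations are handled as follows.

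\emph{Diagonal.} $Y_i=(P_\mu)_{ii}$ is exactly Theorem~\ref{thm:diagonal_of_spectral_filter}, with $\E[z_i(P_\mu\mathbf z)_i]=(P_\mu)_{ii}$.

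\emph{Row and column sums.} $\sum_j (P_\mu)_{ij}=(P_\mu\mathbf 1)_i$ does not need randomness at all. We run the same polynomial propagation on the all-ones input $\mathbf 1$, which is a deterministic message-passing computation. Since $P_\mu$ is symmetric, the column sum coincides with the row sum.

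\emph{Trace.} Each node computes its local diagonal estimate $d_i=z_i(P_\mu\mathbf z)_i$ and sends it to the virtual node. The virtual node sums these values and broadcasts the result, giving $\E[\sum_j d_j]=\trace(P_\mu)$ at every node (Hutchinson's trace estimator).

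\emph{Sum of all entries.} This equals $\mathbf 1^T P_\mu \mathbf 1$. Each node computes $(P_\mu\mathbf 1)_i$ as above, the virtual node sums these and broadcasts. This step is also deterministic.

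Having produced these features per eigenspace, we feed them into the same node-wise MLP and aggregation as BasisNet's IGN head $\varphi$ and its outer network. Equivariance holds in expectation because the probe distribution is permutation invariant.

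The main obstacle is the composition with nonlinearities. DIAL only gives the linear IGN layer in expectation, and in general $\E[\sigma(\hat Y)]\ne\sigma(\E\hat Y)$. The first remedy to try is to average the linear readouts over $r$ probes \emph{before} applying the MLP, rather than after as in the shared readout. By the variance bound of Lemma~\ref{lem:matrix_entry_variance} and the concentration inequality, the averaged estimates then converge to their expectations (graph-angle variance at most $1/(4r)$). Continuity of $\varphi$ then gives convergence of the simulated BasisNet output in probability as $r\to\infty$. The statement "simulate" would therefore be read as exact in expectation for the linear layer, and approximate with high probability for the full network.

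A secondary subtlety is that BasisNet uses the exact spectrum, while DIAL needs the interpolating polynomials. We accept the same hypothesis as Theorem~\ref{thm:structuralQuantities}, namely enough depth $K$ relative to the number of distinct eigenvalues. In the general case, $K$ can be taken as large as $n$.
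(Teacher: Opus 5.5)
Your proposal is correct and follows the paper's own proof. You obtain the diagonal from Theorem~\ref{thm:diagonal_of_spectral_filter}, the row and column sums by propagating the all-ones vector through $p_\mu(S)$, and the trace and total sum by having a virtual node sum the previously computed node features; your explicit handling of the degree assumption on $p_\mu$ and of the nonlinearity (averaging the linear readouts before applying $\varphi$, then using concentration) adds care the paper omits without changing the argument.
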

\begin{proof}
    From Theorem~\ref{thm:diagonal_of_spectral_filter}, DIAL can compute $\diag(P_{\mu_i})$ for each eigenspace. This gives the diagonal operation. For a symmetric graph shift operator, the row-wise and column-wise sums can be computed with $z = \mathbf 1$ as the initial features. The sum of all entries and sum of diagonal entries can be computed with the help of a virtual node, by summing over the node features obtained from the previous operations.
\end{proof}

The diagonal estimate is key to obtaining the diagonal operation, which is not possible with standard message passing GNNs. The diagonal operation is the most important one, as it is the only one not constant across nodes:

\begin{lemma}
    Let $L$ be the Laplacian matrix and $P_\lambda=\Phi_\lambda \Phi_\lambda^T$ be the projection matrix to the eigenspace of eigenvalue $\lambda$.
    Then, the diagonal operation is the only non-constant operation among the five operations above when applied to $P_\lambda$.
\end{lemma}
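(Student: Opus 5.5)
We need to show that, applied to $P_\lambda$, the row sum, column sum, total sum and trace all give node-independent values, while the diagonal operation is in general not constant. The plan is to compute each of the five operations directly, using the structure of the Laplacian spectrum.

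\textbf{Row and column sums.} The key fact is that $L\mathbf 1 = 0$, so the all-ones vector lies in the eigenspace $E_0$. Since $P_\lambda$ is symmetric, the column-wise sum equals the row-wise sum, so it suffices to compute $P_\lambda \mathbf 1$. We split into two cases.
\begin{itemize}
\item If $\lambda \neq 0$, then $E_\lambda \perp E_0 \ni \mathbf 1$, so $P_\lambda\mathbf 1 = 0$. Every row and column sum is therefore $0$.
\item If $\lambda = 0$, then $\mathbf 1 \in E_0$, so $P_0\mathbf 1 = \mathbf 1$. Every row and column sum is therefore $1$.
\end{itemize}
In both cases these two operations are constant across nodes.

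\textbf{Total sum and trace.} These are constant across nodes by definition, since the output $Y_i$ does not depend on $i$. For concreteness, the total sum is $\mathbf 1^T P_\lambda \mathbf 1$, which equals $n$ if $\lambda=0$ and $0$ otherwise. The trace is $\trace(P_\lambda)=|I_\lambda|$, the multiplicity of $\lambda$.

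\textbf{Diagonal operation.} It remains to show that this operation is genuinely non-constant, i.e.\ that there exist a graph and an eigenvalue for which $v\mapsto (P_\lambda)_{vv}$ varies with $v$. I would give a small explicit example, the path on three nodes. Its Laplacian has eigenvalue $1$ with eigenvector $(1,0,-1)/\sqrt2$, so $\diag(P_1)=(1/2,0,1/2)$, which is non-constant.

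The main obstacle is interpretive rather than technical. The lemma should be read as saying that the diagonal is the only operation that \emph{can} vary, not that it varies for every graph and every $\lambda$. For vertex-transitive graphs, the diagonal is constant as well. I would state this reading explicitly, and then the example above suffices.
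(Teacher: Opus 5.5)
Your proof is correct and matches the paper's argument: $\mathbf 1\in E_0$ gives $P_\lambda\mathbf 1=0$ for $\lambda\neq 0$ and $P_0\mathbf 1=\mathbf 1$, symmetry handles the column sums, and the total and diagonal sums are constant by definition. Your path-graph example, showing that the diagonal really can vary, is an addition; the paper asserts this without proof.
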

\begin{proof}
    The vector $\mathbf 1$ is an eigenvector with eigenvalue 0, since $L \mathbf 1 = 0$. The row-wise sums are zero for $\lambda \neq 0$: $P_\lambda \mathbf 1 = 0$ since $\mathbf 1$ is orthogonal to these eigenspaces for any $\lambda \neq 0$. The same holds for the column sums by symmetry. For the case $\lambda =0$, $P_0 \mathbf 1 = \mathbf 1$ since $\mathbf 1$ is an eigenvector. Thus, the row-wise and column-wise sums are constant across nodes in all cases.

    The diagonal sum and total sum are constant across nodes by definition. Note that the diagonal sum $\sum_{i} (P_\lambda)_{ii}$ is equal to the multiplicity of $\lambda$, since $\trace(P_\lambda) = \trace(P_\lambda^T) = \trace(\Phi_\lambda^T \Phi_\lambda) = \text{rank}(\Phi_\lambda)$.
\end{proof}

\section{Cycle Counting Proofs}
\label{app:cycle_counting_proofs}

The following lemma is a basic result in graph theory, which we include for completeness.
\begin{lemma}
    \label{lem:walk_counting}
    Let $A$ be the adjacency matrix. $(A^k)_{vw}$ gives the number of walks of length exactly $k$ from $v$ to $w$.
\end{lemma}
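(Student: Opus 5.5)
The plan is induction on $k$, with the entries of $A^k$ expanded via matrix multiplication. I will use $A_{vw}=1$ if $\{v,w\}\in E$ and $0$ otherwise. If the graph has weights or multi-edges, $A_{vw}$ counts the edges from $v$ to $w$, and the same argument goes through with walks counted edge by edge.

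For the base cases, take $k=0$ with the convention $A^0=I$: there is exactly one walk of length $0$ from $v$ to $w$ exactly when $v=w$, which matches $I_{vw}$. The case $k=1$ also holds directly, since a walk of length $1$ from $v$ to $w$ is a single edge $(v,w)$, and $A_{vw}$ records exactly whether that edge exists.

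For the inductive step, assume $(A^{k})_{vu}$ equals the number of walks of length $k$ from $v$ to $u$ for all pairs $u,v$. Then write
\[
    (A^{k+1})_{vw}=\sum_{u\in V}(A^{k})_{vu}A_{uw}.
\]
Every walk of length $k+1$ from $v$ to $w$ decomposes uniquely into a walk of length $k$ from $v$ to its penultimate vertex $u$, followed by the edge $(u,w)$. Conversely, any such pair concatenates to a walk of length $k+1$. So the walks of length $k+1$ from $v$ to $w$ are in bijection with the disjoint union over $u$ of the pairs (walk of length $k$ from $v$ to $u$, edge from $u$ to $w$). Their count is therefore $\sum_u (\#\text{walks}_k(v,u))\cdot A_{uw}$, which by the inductive hypothesis is the right-hand side above.

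There is no real obstacle here; the only point needing care is making the bijection explicit, meaning the decomposition is by the penultimate vertex and that decomposition is unique. Taking the special case $w=v$ then gives the closed-walk interpretation of $(A^k)_{vv}$ used in Theorem~\ref{thm:structuralQuantities}.
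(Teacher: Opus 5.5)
Your proof is correct and follows the same route as the paper. Both argue by induction on $k$, and in the inductive step partition the walks of length $k+1$ by the vertex $u$ reached at step $k$, which gives $(A^{k+1})_{vw}=\sum_u (A^k)_{vu}A_{uw}$; your extra $k=0$ base case and the explicit statement of the bijection only make that argument more precise.
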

\begin{proof}
    By induction on $k$. The base case $k=1$ is true by definition of the adjacency matrix. Assume the claim holds for $k$. The number of walks of length $k+1$ from $v$ to $w$ can be partitioned according to the node $u$ visited at step $k$, where $u$ must be a neighbor of $w$. This is given by $A^k A$:  
    $$(A^{k+1})_{vw} = \sum_{u} (A^k)_{vu} A_{uw}$$
    This completes the induction.
\end{proof}

Let $C_k(v)$ denote the number of distinct simple undirected $k$-cycles containing vertex $v$. $C_k(v) = 0$ for $k \le 2$. Let $\Cd{k}{v}$ denote the number of distinct simple directed $k$-cycles containing vertex $v$. Note that $\Cd{k}{v} = 2 C_k(v)$ since each undirected cycle can be traversed in two directions.

\begin{lemma}[Triangles]
    \label{lem:triangles}
    For any vertex $v$, $\Cd{3}{v} = (A^3)_{vv}$
\end{lemma}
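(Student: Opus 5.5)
The plan is to invoke Lemma~\ref{lem:walk_counting} with $k=3$ and $w=v$: $(A^3)_{vv}$ counts the closed walks of length $3$ that start and end at $v$. It then suffices to set up a bijection between these closed walks and the directed simple $3$-cycles through $v$, where a directed cycle is a cyclic sequence of distinct vertices with an orientation and, when it passes through $v$, is read off starting at $v$.

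First, I will classify a closed walk $v \to a \to b \to v$. Since the graph is simple with no self-loops ($A_{ii}=0$), consecutive vertices differ: $a \neq v$, $b \neq a$, and $b \neq v$. The only remaining coincidence is $a$ versus $b$, but these are consecutive, so $a \neq b$. The vertices $v,a,b$ are therefore pairwise distinct and pairwise adjacent, so they form a triangle containing $v$ traversed in a definite direction.

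Conversely, each directed $3$-cycle containing $v$, read from $v$, gives exactly one such walk. This map is injective, since the walk determines the ordered pair $(a,b)$ and hence the orientation. It is also surjective by the previous step. Each undirected triangle $\{v,a,b\}$ contributes the two walks $(v,a,b,v)$ and $(v,b,a,v)$, which matches the convention $\Cd{3}{v}=2C_3(v)$.

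The only real care point is the degeneracy check: walks of length $3$ cannot backtrack into a non-simple closed walk, because any repetition would require a self-loop. This is exactly where length $3$ is special, since for $k=4$ backtracking walks such as $v\to a\to v\to b\to v$ must be subtracted. I would state the simple-graph, no-self-loop assumption explicitly and then conclude $\Cd{3}{v}=(A^3)_{vv}$.
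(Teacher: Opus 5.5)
Your proposal is correct and follows the paper's argument. The paper's proof is the one-line observation that, with no self-loops, every closed walk of length $3$ is a simple directed $3$-cycle, with the walk-counting lemma used implicitly; you make the walk-to-directed-cycle bijection and the distinctness check explicit. In $v\to a\to b\to v$ all three vertex pairs are consecutive, so you already have $a\neq b$ from $b\neq a$, and your separate ``remaining coincidence'' step is redundant but harmless.
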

\begin{proof}
    Since there are no self-loops, any closed walk of length 3 is a simple directed 3-cycle.
\end{proof}

\begin{lemma}[4-cycles]
$\Cd{4}{v} = (A^4)_{vv} - d(v)^2 + d(v) - \sum_{w \in N(v)} d(w) $
\end{lemma}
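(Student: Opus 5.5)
We prove the formula by classifying the closed walks of length $4$ at $v$. By Lemma~\ref{lem:walk_counting}, $(A^4)_{vv}$ counts all closed walks $v=x_0,x_1,x_2,x_3,x_4=v$ with consecutive vertices adjacent. Since the graph is simple, consecutive vertices are distinct. Such a walk is a simple directed $4$-cycle through $v$ exactly when $x_0,x_1,x_2,x_3$ are pairwise distinct. The adjacency constraints already give $x_1\neq v$, $x_3\neq v$ and $x_2\neq x_1,x_3$. So a walk fails to be a simple cycle only if $x_2=v$ or $x_1=x_3$ (or both). We therefore count the degenerate walks and subtract them from $(A^4)_{vv}$.

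The degenerate walks split into three sets, handled in this order.
\begin{itemize}
    \item \emph{$x_2=v$:} the walk is $v,x_1,v,x_3,v$ with $x_1,x_3\in N(v)$ chosen independently, giving $d(v)^2$ walks.
    \item \emph{$x_1=x_3=w$:} the walk is $v,w,x_2,w,v$ with $w\in N(v)$ and $x_2\in N(w)$, giving $\sum_{w\in N(v)} d(w)$ walks.
    \item \emph{Both conditions:} the walk is $v,w,v,w,v$ with $w\in N(v)$, giving $d(v)$ walks.
\end{itemize}
By inclusion--exclusion, the number of degenerate walks is $d(v)^2+\sum_{w\in N(v)}d(w)-d(v)$.

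Each remaining walk visits four distinct vertices in cyclic order and returns to $v$. It therefore corresponds bijectively to a simple directed $4$-cycle containing $v$, with $v$ as the designated start vertex. Each such cycle yields exactly one walk starting at $v$, and its orientation is recorded by the walk. This gives
\[
    \Cd{4}{v}=(A^4)_{vv}-d(v)^2+d(v)-\sum_{w\in N(v)}d(w).
\]

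The main point needing care is the inclusion--exclusion step: the walks $v,w,v,w,v$ lie in both degenerate families, so they must be added back once. This is exactly the $+d(v)$ term. We should also check that no other degeneracy is possible, for instance $x_2=x_1$, which is excluded because there are no self-loops.
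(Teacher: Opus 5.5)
Your proposal is correct and follows the same approach as the paper: subtract from $(A^4)_{vv}$ the degenerate walks of the forms $v,x,v,y,v$ (counted as $d(v)^2$) and $v,x,y,x,v$ (counted as $\sum_{w\in N(v)} d(w)$), then add back the $d(v)$ walks $v,w,v,w,v$ lying in both families. You are also more explicit than the paper about why $x_2=v$ or $x_1=x_3$ are the only possible degeneracies, and about why the remaining walks correspond bijectively to directed $4$-cycles through $v$.
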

\begin{proof}
    We count the number of closed walks of length 4 that are not simple cycles, starting from $v$. Closed walks of length 4 which are not simple cycles must revisit some node. This node can be $v$, meaning the walk must be of the form $v-x-v-y-v$, which can be formed in $d(v)^2 = ((A^2)_{vv})^2$ ways. If the revisited node is not $v$, the walk must be of the form $v-x-y-x-v$, which can be formed in $\sum_{w \in N(v)} d(w)$ ways. These cases overlap for paths of the form $v-x-v-x-v$, of which there are $d(v)=(A^2)_{vv}$ in total. 
\end{proof}

\begin{lemma}[5-cycles]
    \[
    \Cd{5}{v}
    =
    (A^5)_{vv}
    -
    \Big(
    2d(v)(A^3)_{vv}
    +
    2\sum_{u\in N(v)} d(u)(A^2)_{uv}
    +
    \sum_{u\in N(v)} (A^3)_{uu}
    -
    5(A^3)_{vv}
    \Big).
    \]
    \end{lemma}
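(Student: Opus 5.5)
The plan is the same strategy as for the 4-cycle lemma. By Lemma~\ref{lem:walk_counting}, $(A^5)_{vv}$ counts closed walks $v=w_0,w_1,\dots,w_5=v$ of length 5. We subtract those that are not simple directed 5-cycles, so it suffices to show that the bracketed expression counts exactly the closed 5-walks at $v$ that revisit some vertex.

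\textbf{Structural step.} In a simple graph without self-loops, a closed walk of length $5$ that repeats a vertex decomposes as a directed triangle together with one back-and-forth excursion $x\to y\to x$ along an edge. To see this, cut the walk at the first repeated vertex into two shorter closed walks of lengths summing to $5$. The lengths cannot be $1$ (no loops), so they are $2+3$. A closed walk of length $2$ is a back-track, and a closed walk of length $3$ is a triangle by Lemma~\ref{lem:triangles}.

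\textbf{Case split.} I would classify the non-simple walks by the position of the triangle and of the excursion relative to $v$, and count each class by a matrix-entry formula.
\begin{itemize}
\item[(a)] The triangle passes through $v$ and the excursion hangs at $v$, either before or after the triangle: $v\,x\,v\,a\,b\,v$ or $v\,a\,b\,v\,x\,v$. There are $d(v)$ choices of $x$ and $(A^3)_{vv}$ directed triangles, giving $2d(v)(A^3)_{vv}$.
\item[(b)] The triangle passes through $v$ and the excursion hangs at a triangle vertex $u\in N(v)$: $v\,u\,w\,u\,x\,v$ or its mirror $v\,x\,u\,w\,u\,v$. Here $w$ ranges over the $d(u)$ neighbours of $u$, and $x$ over the $(A^2)_{uv}$ common neighbours of $u$ and $v$. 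This gives $2\sum_{u\in N(v)} d(u)(A^2)_{uv}$.
\item[(c)] The triangle avoids $v$ and is attached through the excursion $v\,u\,\cdots\,u\,v$ with $u\in N(v)$. This has the form $v\,u\,a\,b\,u\,v$, giving $\sum_{u\in N(v)}(A^3)_{uu}$.
\end{itemize}
In each class I would check that every non-simple walk is produced and that the parametrisation is correct. In particular, the walks in (c) with $v\in\{a,b\}$ are exactly degenerate walks already of type (a) or (b).

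\textbf{Overlap correction (main obstacle).} The hard part is the inclusion–exclusion. Several walks are generated more than once, namely when the excursion endpoint coincides with a triangle vertex. Examples are $v\,a\,v\,a\,b\,v$, $v\,a\,b\,v\,b\,v$, $v\,a\,b\,a\,b\,v$, $v\,a\,b\,a\,v$-type reroutings, and the type-(c) walks $v\,a\,v\,b\,a\,v$ in which $v$ itself lies on the triangle.

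I would fix one directed triangle $(v,a,b)$ and list explicitly all length-5 closed walks supported on $\{v,a,b\}$ that start at $v$. For each one I would record how many times it is counted across (a), (b) and (c), and sum the excess multiplicities. The goal is to verify that the excess is exactly $5$ per directed triangle at $v$, so that the total over-count is $5(A^3)_{vv}$. This finite enumeration is where errors are most likely, so I would carry it out as a table of the walks on three vertices.

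Subtracting the corrected count of non-simple walks from $(A^5)_{vv}$ then yields $\Cd{5}{v}$.
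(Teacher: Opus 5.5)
Your plan is correct and rests on the same decomposition as the paper: a directed triangle plus one backtrack, with your five rooted patterns giving exactly the five positive terms. The difference lies in how the correction $5(A^3)_{vv}$ is obtained.

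The paper partitions the non-simple walks by whether the triangle contains $v$. Inside that case it removes three coincidences per orientation (a backtrack along a triangle edge, inserted at either endpoint), giving $-3(A^3)_{vv}$. In the other case it discards the triangles at $u$ that pass through $v$, via $(A^3)_{uu}-2(A^2)_{uv}$ and $\sum_{u\in N(v)}(A^2)_{uv}=(A^3)_{vv}$, giving $-2(A^3)_{vv}$.

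You instead keep all five raw pattern counts and correct once. So your class (c) is really the full pattern $v\,u\,a\,b\,u\,v$, not ``triangle avoids $v$''. This treats the five insertion positions symmetrically and explains the coefficient $5$. As written, however, you only announce the decisive count as a goal, and you assert without proof that walks through a fourth vertex are not over-counted.

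Both points follow from one observation. Your five patterns are exactly the conditions $w_i=w_{i+2}$ (indices mod $5$, $i=0,\dots,4$). In a closed $5$-walk any repeated vertex occurs at cyclic distance $2$, because distance $1$ is excluded by the absence of loops. Hence the sum of the five counts counts each walk once per such position, and a walk is non-simple iff it has at least one.

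Positions $i$ and $i+2$ cannot both hold: that would give $w_{i+4}=w_i=w_{i+5}$, two consecutive equal vertices. So at most two positions hold, and when two hold they are consecutive and the walk is cyclically $x\,y\,x\,y\,z$ on a triangle. Hence walks with four distinct vertices are counted exactly once.

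Conversely, a closed $5$-walk at $v$ on a triangle $\{v,a,b\}$ is a proper $3$-colouring of $C_5$. Its colour classes have sizes $2,2,1$, so it has exactly two such positions. For the orientation $v\to a\to b\to v$ these walks are $v\,a\,v\,a\,b\,v$, $v\,a\,b\,a\,b\,v$, $v\,a\,b\,v\,b\,v$, $v\,a\,b\,v\,a\,v$ and $v\,b\,v\,a\,b\,v$, one per consecutive pair of positions. Swapping $a$ and $b$ gives the other five, for $10=(A^5)_{vv}$ of $K_3$ in total. The excess is therefore $5$ per directed triangle, i.e.\ $5(A^3)_{vv}$.

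In your table, attribute each walk to its net orientation, or work per undirected triangle with excess $10$. Charging all walks on $\{v,a,b\}$ to each directed triangle would double the correction. Also, your example ``$v\,a\,b\,a\,v$'' has length $4$; the missing double count is $v\,a\,b\,v\,a\,v$.
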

    
    \begin{proof}
    The entry $(A^5)_{vv}$ counts all closed walks of length $5$ starting and ending
    at $v$. We subtract those that are not simple $5$-cycles.
    
    Every non-simple closed walk of length $5$ in a simple graph consists of a
    directed triangle together with a backtracking walk of the form $x-y-x$. We count such 5-walks in two cases.
    
    First suppose that the triangle contains $v$. Let $T_v$ be the set of undirected triangles containing $v$, 
    $$T_v=\{\{v,a,b\} : \{v,a\},\{a,b\},\{b,v\}\in E\}$$
    Fix an undirected triangle
    $\{v,a,b\}$. There are two orientations. For one orientation, say
    \[
    v \to a \to b \to v,
    \]
    the backtracking walk can be inserted at one of the four positions $v, a, b, v$. Here, the two occurrences of $v$ are distinct, since the 5-walk is rooted at $v$.
    Thus, before correcting overcounting, this gives
    \[
    2\bigl(2d(v)+d(a)+d(b)\bigr)
    \]
    walks for the triangle $\{v,a,b\}$.
    
    However, if the edge on the backtracking walk is one of the triangle edges, the same walk is
    counted twice: once as a backtrack inserted at one endpoint of the edge and once
    as a backtrack inserted at the other endpoint. For each orientation there are
    three such choices. Therefore the number of non-simple closed $5$-walks whose underlying triangle contains $v$ is
    \[
    \sum_{\{v,a,b\} \in T_v}
    \left(
    4d(v)+2d(a)+2d(b)-6
    \right),
    \]

    We now rewrite this expression in matrix form. First, since $d(v)$ is independent of the triangle,
    \[
    \sum_{\{v,a,b\}\in T_v}4d(v)
    =
    4d(v)|T_v|
    =2d(v) (A^3)_{vv}
    \]
    Second, consider the term involving the degrees of the two non-$v$ vertices:
    \[
    \sum_{\{v,a,b\}\in T_v} \bigl(2d(a)+2d(b)\bigr).
    \]
    We rewrite this by summing over neighbors $u\in N(v)$. A neighbor $u$ appears
    as one of the two non-$v$ vertices in exactly as many triangles as there are
    common neighbors of $u$ and $v$. This number is $(A^2)_{uv}$. Thus
    \[
    \sum_{\{v,a,b\}\in T_v} \bigl(2d(a)+2d(b)\bigr)
    =
    2\sum_{u\in N(v)} d(u)(A^2)_{uv}.
    \]

    Finally,
    \[
    \sum_{\{v,a,b\}\in T_v}6
    =
    6|T_v|
    =
    3(A^3)_{vv}.
    \]
    Combining the three identities gives
    \[
    \sum_{\{v,a,b\}\in T_v}
    \left(
    4d(v)+2d(a)+2d(b)-6
    \right)
    =
    2d(v)(A^3)_{vv}
    +
    2\sum_{u\in N(v)} d(u)(A^2)_{uv}
    -
    3(A^3)_{vv}.
    \]
    
    It remains to count the non-simple closed $5$-walks whose triangle does not
    contain $v$. Such a walk must have the form
    \[
    v \to u \to \text{(a directed triangle walk based at } u\text{)} \to v
    \]
    for some $u\in N(v)$. The number of directed triangles through $u$ is
    $(A^3)_{uu}$. Among these, the directed triangles that also contain $v$ are
    exactly those using the edge $\{u,v\}$, and there are $2(A^2)_{uv}$ of them.
    Therefore this case contributes
    \[
    \sum_{u\in N(v)}
    \left(
    (A^3)_{uu}-2(A^2)_{uv}
    \right).
    \]
    The expression can be simplified using the fact that $\sum_{u\in N(v)} (A^2)_{uv} = (A^3)_{vv}$. Hence,
    the second case contributes
    \[
        - 2(A^3)_{vv} + \sum_{u\in N(v)} (A^3)_{uu}.
    \]
    
    Adding the two cases, the number of non-simple closed $5$-walks from $v$ to
    itself is
    \[
    2d(v)(A^3)_{vv}
    +
    2\sum_{u\in N(v)} d(u)(A^2)_{uv}
    +
    \sum_{u\in N(v)} (A^3)_{uu}
    -
    5(A^3)_{vv}.
    \]
    Subtracting these walks from all closed $5$-walks gives the claimed formula.
    \end{proof}

\section{Cycle Counting in \cite{kanatsoulis2024counting} and \cite{kanatsoulis2025learning}}
\label{app:kanatsoulis}

\citet{kanatsoulis2024counting} study the substructure-counting power of GNNs under random inputs. However, a key assumption in their analysis does not appear to be satisfiable, which undermines the validity of the resulting guarantees. We highlight this issue because the same assumption is directly reused in PEARL \cite{kanatsoulis2025learning}, which is closely related to our method.

In more detail, the analysis in \cite{kanatsoulis2024counting} uses higher-order moments of random variables. A core assumption is access to random variables with all moments equal to 1:
\begin{quote}
    We assume that they $(x_1, \dots, x_N)$ are identically distributed and satisfy $\E[x_i] = 0, \E[x_i^p] = 1$ for $i =1, \dots, N$ and $p \ge 2 \in \mathbb{Z}$
\end{quote}

\noindent While this assumption can hold for even moments (for example by taking $\Pr(x_i=\pm 1) = 1/2$), it cannot hold simultaneously for odd moments:

\begin{lemma}
    There does not exist a random variable $X$ such that 
    $$\E[X] = 0, \E[X^2]=1, \E[X^3] = 1, \E[X^4]=1$$
    hold simultaneously.
\end{lemma}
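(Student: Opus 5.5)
Suppose such an $X$ exists; we will derive a contradiction using only the Cauchy--Schwarz inequality. From $\E[X^2]=\E[X^4]=1$ we get $\Var(X^2)=\E[X^4]-(\E[X^2])^2=0$, so $X^2=1$ almost surely. Hence $X\in\{-1,+1\}$ almost surely.

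Since $X^2=1$ almost surely, $X^3=X\cdot X^2=X$ almost surely. Therefore $\E[X^3]=\E[X]=0$, which contradicts $\E[X^3]=1$.

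A second route avoids the almost-sure argument. By Cauchy--Schwarz,
\[
\E[X^3]^2=\E[X\cdot X^2]^2\le \E[X^2]\,\E[X^4]=1,
\]
and equality forces $X^2=cX$ almost surely for some constant $c$. Then $\E[X^2]=c\,\E[X]=0$, which contradicts $\E[X^2]=1$.

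Either route is elementary, so I anticipate no real obstacle. The only subtle point is justifying the step from zero variance of $X^2$ to $X^2=1$ almost surely. This holds because $X^2-1$ is a random variable with $\E[(X^2-1)^2]=0$, so it vanishes almost surely. Finally, I note that the statement shows the moment assumption quoted from \cite{kanatsoulis2024counting} cannot hold. That assumption requires all moments of order $p\ge 2$ to equal $1$, and the lemma shows this already fails at orders $2$, $3$ and $4$ together with $\E[X]=0$.
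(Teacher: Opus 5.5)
Your proposal is correct, and your second route is precisely the paper's proof. Cauchy--Schwarz applied to $X$ and $X^2$ is tight since $\E[X^3]^2 = 1 = \E[X^2]\,\E[X^4]$. The equality case gives $X^2 = cX$ almost surely, and taking expectations yields $1 = c\,\E[X] = 0$.

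Your first route is genuinely different. You show $\E[(X^2-1)^2] = \E[X^4] - 2\E[X^2] + 1 = 0$, so $X^2 = 1$ almost surely, hence $X^3 = X$ and $\E[X^3] = \E[X] = 0$. This needs only the fact that a nonnegative random variable with zero mean vanishes almost surely. It does not need the equality characterization of Cauchy--Schwarz. Stated precisely, that characterization gives linear dependence of $X$ and $X^2$, so the degenerate alternative $X = 0$ almost surely must be excluded, which $\E[X^2]=1$ does.

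Your first route is also more informative. It shows that $\E[X]=0$ and $\E[X^2]=\E[X^4]=1$ already force $X$ to be exactly Rademacher. Every odd moment of order at least $3$ therefore equals $0$. This makes transparent why the assumption quoted from \cite{kanatsoulis2024counting} can hold for even moments but must fail for all odd ones.

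The paper's route is marginally shorter, because it uses $\E[X^3]=1$ directly to make Cauchy--Schwarz tight. It does not expose this structure.
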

\begin{proof}
    Applying the Cauchy--Schwarz inequality (Theorem~\ref{thm:cauchy_schwarz})
    to $X$ and $X^2$ gives
    \[
        1
        =
        |\E[X^3]|^2
        \le
        \E[X^2]\E[X^4]
        =
        1.
    \]
    Thus equality holds. Equality in Cauchy--Schwarz implies that
    $X^2=cX$ almost surely for some constant $c$. Taking expectations gives
    \[
        1=\E[X^2]=c\E[X]=0,
    \]
    which is a contradiction.
\end{proof}

The Cauchy-Schwarz inequality is included below for convenience: 
\begin{theorem}[Cauchy-Schwarz inequality]
    \label{thm:cauchy_schwarz}
    For any two random variables $A$, $B$, we have
    $$|\E[AB]|^2 \le \E[A^2]\E[B^2]$$
    where equality holds iff $A = cB$, for some constant $c$. 
\end{theorem}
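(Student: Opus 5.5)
The statement to prove is the Cauchy--Schwarz inequality for random variables, $|\E[AB]|^2 \le \E[A^2]\E[B^2]$, together with its equality case. I would use the classical quadratic-polynomial argument. I assume $\E[A^2]$ and $\E[B^2]$ are finite; otherwise the right-hand side is infinite and there is nothing to prove. With finite second moments, $|AB| \le \tfrac12(A^2+B^2)$ shows that $\E[AB]$ is well defined.

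\textbf{Inequality.} First treat the degenerate case $\E[B^2]=0$. Then $B=0$ almost surely, so $\E[AB]=0$ and the inequality holds trivially. Also $B = 0\cdot A$, which fits the equality clause read symmetrically.

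If $\E[B^2]>0$, I consider, for real $t$,
\[
q(t)=\E[(A-tB)^2]=\E[A^2]-2t\,\E[AB]+t^2\,\E[B^2]\ \ge 0 .
\]
This is a nonnegative quadratic in $t$, so its discriminant is nonpositive: $4\E[AB]^2-4\E[A^2]\E[B^2]\le 0$. That is the claimed inequality. Equivalently, plug in $t^*=\E[AB]/\E[B^2]$ to get $q(t^*)=\E[A^2]-\E[AB]^2/\E[B^2]\ge 0$.

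\textbf{Equality case.} If equality holds and $\E[B^2]>0$, then $q(t^*)=0$. So $\E[(A-t^*B)^2]=0$, which forces $A=t^*B$ almost surely; here one uses that a nonnegative random variable with zero mean vanishes almost surely. Conversely, if $A=cB$, then both sides equal $c^2\E[B^2]^2$.

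\textbf{Main obstacle.} The only delicate point is that "$A=cB$" must be read almost surely, and in the degenerate case it must be read symmetrically (either $A=cB$ or $B=cA$). For example, $A$ arbitrary and $B=0$ gives equality, yet $A=cB$ fails unless $A=0$. I would state that caveat explicitly.

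The caveat is harmless for the use made of this result in the preceding lemma, where Cauchy--Schwarz is applied to $X$ and $X^2$. There $\E[X^2]=1>0$, so $X^2$ is not almost surely zero; equality then yields $X=cX^2$ almost surely (the lemma's proof writes $X^2=cX$, which follows as well once $\E[X^2]>0$ is used symmetrically). From $X=cX^2$, taking expectations gives $0=\E[X]=c\,\E[X^2]=c$, so $X=0$ almost surely, contradicting $\E[X^2]=1$. The contradiction therefore goes through either way.
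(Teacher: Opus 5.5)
Your proof is correct. The paper gives no proof of this statement: it quotes Cauchy--Schwarz as a standard fact ``for convenience,'' so there is no paper argument to compare against. Your discriminant argument (equivalently, evaluating $\E[(A-t^*B)^2]$ at $t^*=\E[AB]/\E[B^2]$) is the textbook proof, and it handles both the inequality and the equality case properly.

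Your caveat is a genuine correction to the statement as printed. The equality clause must be read almost surely and symmetrically: equality holds iff $A$ and $B$ are linearly dependent a.s. For example, $B=0$ a.s.\ with $A\neq 0$ gives equality, yet $A=cB$ fails for every $c$.

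You are also right that this does not affect the paper's moment lemma. There $\E[X^2]=1>0$, so equality yields $X^2=cX$ (or $X=cX^2$) a.s., and the contradiction follows either way.

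One small nit concerns your opening reduction. If one second moment is infinite and the other is zero, the right-hand side is $\infty\cdot 0$, not $\infty$. The claim still holds under the usual convention $0\cdot\infty=0$, because the variable with zero second moment vanishes a.s., which forces $\E[AB]=0$.
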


\section{Dataset Details}
\label{app:dataset_details}

\paragraph{ZINC}
We use the 12,000-molecule subset of ZINC and its 10,000/1,000/1,000 train/validation/test split \cite{irwin2012zinc,dwivedi2023benchmarking}. Molecules are represented by graphs with categorical atom types as node features and categorical bond types as edge features. The task is graph-level regression with the standard normalized penalized-logP as target, evaluated by mean absolute error.

\paragraph{Cycle counting}
We use the synthetic benchmark of \citet{huang2022boosting}, containing 5,000 undirected graphs divided into 1,500/1,000/2,500 training, validation, and test graphs. The node-level targets are the numbers of 3-, 4-, 5-, and 6-cycles containing each node. The graphs have no input node or edge attributes, so every node receives the same constant feature. We report mean absolute error normalized by the target standard deviation.

\paragraph{Directed pattern detection}
We use the synthetic directed-multigraph benchmark of \citet{egressy2024provably}. The dataset has a separate 8,192-node graph for training, validation, and testing. The task is node-level multi-label classification of participation in directed cycles of lengths 2--6, scatter-gather motifs, and bicliques \cite{altman2023realistic}. Node features are constant and there are no edge attributes. Performance is measured by minority-class F1 for each pattern.

\paragraph{DrugOOD}
We use the LBAP-core IC50 task from DrugOOD \cite{ji2023drugood}, under distribution shifts defined by assay, molecular scaffold, and molecule size. Molecules are represented using 39-dimensional atom features, including atom type, hybridization, hydrogen count, and chirality, and 10-dimensional bond features, including bond type, conjugation, ring membership, and stereochemistry. Each setting provides an in-distribution training split and both in- and out-of-distribution validation and test splits. The split sizes depend on the chosen shift.

\paragraph{CIFAR10 superpixels}
We use the CIFAR10 superpixel benchmark of \citet{dwivedi2023benchmarking}, with 45,000/5,000/10,000 train/validation/test graphs. Each image is represented by an 8-nearest-neighbor graph containing 85--150 superpixels. Node features contain continuous RGB values and two-dimensional coordinates, while each edge has a scalar feature derived from spatial distance. The task is graph classification with 10 classes, evaluated by accuracy.

\begin{table}[h]
\centering
\caption{Statistics and input features for the datasets with fixed train/validation/test splits. Average node and edge counts are computed over the full dataset. DrugOOD uses shift-dependent five-way splits and is described in the text.}
\label{tab:dataset-stats}
\resizebox{\linewidth}{!}{%
\begin{tabular}{lcrrll}
\toprule
Dataset & \#Graphs (train/val/test) & Avg. nodes & Avg. edges & Node inputs & Edge inputs \\
\midrule
ZINC & 10{,}000 / 1{,}000 / 1{,}000 & 23.16 & 49.83 & Atom type (28 cat.) & Bond type (3 cat.) \\
Cycle counting & 1{,}500 / 1{,}000 / 2{,}500 & 18.80 & 62.68 & Constant & None \\
Directed pattern detection & 1 / 1 / 1 & 8{,}192.00 & 23{,}630.67 & Constant & None \\
CIFAR10 superpixels & 45{,}000 / 5{,}000 / 10{,}000 & 117.63 & 941.07 & RGB + coordinates (5D) & Spatial distance (1D) \\
\bottomrule
\end{tabular}%
}
\end{table}

\section{Experimental Details}
\label{app:experimental_details}

For ZINC, the structural encoder uses $r=120$ Rademacher probes, a 37-dimensional output encoding, and an eight-layer GIN with hidden dimension 40. The resulting encoding is added to the initial node embeddings and passed to the task GNN: four GINE layers with hidden dimension 128, additive graph pooling, and a three-layer prediction MLP. 

For the cycle counting task, we use the same DIAL structural encoder and no additional base GNN. DIAL uses $r=120$ Rademacher probes, a 37-dimensional output encoding, and a six-layer GIN with hidden dimension 40. This encoding is added to 64-dimensional node embeddings, and node-level predictions are made directly with a two-layer MLP.

For directed pattern detection, both R-PEARL and DIAL are trained as node-level multi-label classifiers using $r=120$ random probes, $K=4$, a 37-dimensional structural encoding, six learned GIN layers with hidden dimension 40, and no additional downstream base GNN. Predictions are made from the resulting node embeddings with a two-layer MLP.

In all experiments, the DIAL structural encoder and the downstream task GNN are trained jointly in an end-to-end fashion. We use the Adam~\citep{kingma2015adam} optimizer with a base learning rate of $0.001$. For directed pattern detection and ZINC, we train for 150 and 2000 epochs, respectively, using a cosine learning-rate schedule with warmup. For the cycle counting task, we train for 2000 epochs and use a ReduceLROnPlateau learning-rate scheduler. Results are averaged over five random seeds for pattern detection and ZINC, and over three random seeds for node-level cycle counting, following the evaluation protocol of the compared methods.

\subsection{Runtime Results}

\begin{table}[t]
\centering
\caption{Per-epoch training and inference time for DIAL on each main benchmark used in the paper. Total training time is a projection, computed as the mean per-epoch training time multiplied by the reported number of training epochs.}
\label{tab:dial-runtime}
\resizebox{1\linewidth}{!}{
\begin{tabular}{lrrrrr}
\toprule
Dataset & Epochs & Train (s/epoch) & Inference (s) & Total train (h) & Peak mem. (GiB) \\
\midrule
ZINC  & 2000 & $15.15 \pm 0.03$ & $0.91 \pm 0.01$ & 8.42 & 9.94 \\
Directed pattern detection & 150 & $0.171 \pm 0.000$ & $0.078 \pm 0.000$ & 0.01 & 10.67 \\
Cycle counting & 2000 & $1.191 \pm 0.009$ & $0.952 \pm 0.011$ & 0.66 & 5.11 \\
\bottomrule
\end{tabular}
}
\end{table}

Table~\ref{tab:dial-runtime} reports the per-epoch training and inference times for DIAL on each benchmark used in the paper. Experiments were run on a single machine with an NVIDIA RTX 4090 (24 GB) GPU and an AMD Ryzen 9 7950X CPU using PyTorch 2.2.2 and CUDA 11.8. Training and inference times are reported as per-epoch means over 22 measured epochs, excluding 3 warmup epochs, and averaged across 3 independent runs.

\section{Additional Experimental Results}
\label{app:additional_experiments}

\subsection{GraphGPS Backbone}

To test whether DIAL is also useful with a hybrid message-passing/Transformer architecture, we combine it with a GraphGPS backbone \citep{rampasek2022recipe}. We use the same dataset, fixed train/validation/test splits, target normalization, evaluation metric, three seeds, and 2,000-epoch training budget as in the original cycle-counting experiment. The GraphGPS and DIAL+GraphGPS variants use identical four-layer GraphGPS backbones with GIN as the local model and four attention heads, and differ only in whether DIAL is included. The original DIAL results are included as a reference.

\begin{center}
\begin{minipage}{\linewidth}
\centering
\captionsetup{hypcap=false}
\captionof{table}{Node-level cycle-counting results with a GraphGPS backbone. We report normalized MAE as mean $\pm$ standard deviation over three seeds; lower is better. Best results are bolded.}
\label{tab:graphgps-cycle-counting}
\setlength{\tabcolsep}{2.5pt}
\scriptsize
\begin{tabular}{lcccc}
\toprule
Method & 3-Cycle & 4-Cycle & 5-Cycle & 6-Cycle \\
\midrule
GraphGPS
& $.2557 \pm .0033$
& $.2027 \pm .0029$
& $.1680 \pm .0031$
& $.1268 \pm .0019$ \\
DIAL (original)
& $\boldsymbol{.0041 \pm .0001}$
& $.0185 \pm .0027$
& $\boldsymbol{.0271 \pm .0005}$
& $.0348 \pm .0030$ \\
DIAL + GraphGPS
& $.0080 \pm .0011$
& $\boldsymbol{.0177 \pm .0006}$
& $.0274 \pm .0012$
& $\boldsymbol{.0292 \pm .0009}$ \\
\bottomrule
\end{tabular}
\end{minipage}
\end{center}

Adding DIAL substantially reduces the error of GraphGPS across all four targets. Relative to the original DIAL architecture, DIAL+GraphGPS has lower mean error on 4- and 6-cycles, similar error on 5-cycles, and higher error on 3-cycles.

\subsection{Ablation of the Post-GIN Correlation}

DIAL correlates the original random probes with their propagated representations at two points in the structural encoder: after linear graph-shift propagation and after the learned GIN layers. The first correlation is the mechanism analyzed theoretically. Since the GIN layers contain nonlinearities, the same analysis does not directly apply to the second correlation. We therefore ablate only the post-GIN correlation, leaving the GIN layers and all other architectural and training choices unchanged.

\begin{center}
\begin{minipage}{\linewidth}
\centering
\captionsetup{hypcap=false}
\captionof{table}{Effect of removing the post-GIN probe correlation on node-level cycle counting. We report normalized MAE as mean $\pm$ standard deviation over three seeds; lower is better. Best results are bolded.}
\label{tab:cycle-correlation-ablation}
\setlength{\tabcolsep}{2.5pt}
\scriptsize
\begin{tabular}{lcccc}
\toprule
Method & 3-Cycle & 4-Cycle & 5-Cycle & 6-Cycle \\
\midrule
DIAL
& $\boldsymbol{.0041 \pm .0001}$
& $\boldsymbol{.0185 \pm .0027}$
& $.0271 \pm .0005$
& $\boldsymbol{.0348 \pm .0030}$ \\
DIAL (no post-GIN corr.)
& $.0046 \pm .0007$
& $.0203 \pm .0020$
& $\boldsymbol{.0260 \pm .0006}$
& $.0371 \pm .0042$ \\
\bottomrule
\end{tabular}
\end{minipage}
\end{center}

\begin{center}
\begin{minipage}{\linewidth}
\centering
\captionsetup{hypcap=false}
\captionof{table}{Effect of removing the post-GIN probe correlation on directed pattern detection. Per-pattern entries are minority-class F1 scores in percent, reported as mean $\pm$ standard deviation over five seeds. Macro F1 is the unweighted mean of the per-pattern means; higher is better. Best results are bolded.}
\label{tab:pattern-correlation-ablation}
\setlength{\tabcolsep}{3.5pt}
\resizebox{\linewidth}{!}{%
\begin{tabular}{lcccccccc}
\toprule
Method & 2-Cycle & 3-Cycle & 4-Cycle & 5-Cycle & 6-Cycle & S-G & B-C & Macro F1 \\
\midrule
DIAL
& $\boldsymbol{91.07 \pm 3.33}$
& $\boldsymbol{82.44 \pm 4.48}$
& $\boldsymbol{79.72 \pm 1.50}$
& $\boldsymbol{72.10 \pm 2.21}$
& $\boldsymbol{74.83 \pm 2.95}$
& $\boldsymbol{78.68 \pm 2.79}$
& $\boldsymbol{66.07 \pm 1.48}$
& $\boldsymbol{77.84}$ \\
DIAL (no post-GIN corr.)
& $70.56 \pm 17.03$
& $71.42 \pm 5.06$
& $75.36 \pm 2.50$
& $69.21 \pm 2.57$
& $73.52 \pm 4.64$
& $75.81 \pm 4.26$
& $65.45 \pm 0.78$
& $71.62$ \\
\bottomrule
\end{tabular}
}
\end{minipage}
\end{center}

Removing the post-GIN correlation has a small and mixed effect on cycle counting: it increases the error for 3-, 4-, and 6-cycles, but slightly reduces it for 5-cycles. On directed pattern detection, the ablation performs worse on every target and reduces macro F1 from $77.84$ to $71.62$, with the largest drops on 2 and 3-cycles. Thus, the post-GIN correlation is empirically useful, particularly for directed pattern detection, even though our estimator analysis applies only to the correlation after linear propagation.

We do not include a variant without either probe correlation because it would closely overlap with the R-PEARL random-feature architecture already evaluated in Tables~\ref{tab:cycle_counting} and~\ref{tab:directed_pattern_detection}.

\subsection{CIFAR10 Superpixels}

We additionally evaluate DIAL on CIFAR10 superpixels, where the input node features contain continuous color and coordinate information. Both methods use 120 probes and concatenate the resulting representation with the task features. All variants use the same four-layer downstream GINE classifier. Table~\ref{tab:cifar10-results} shows that both methods improve over the task-only GINE backbone, while R-PEARL performs better than DIAL on this benchmark.

\begin{center}
\begin{minipage}{\linewidth}
\centering
\captionsetup{hypcap=false}
\captionof{table}{CIFAR10 superpixel graph-classification accuracy (\%) at the checkpoint with the highest validation accuracy. We report mean $\pm$ standard deviation over three seeds; higher is better.}
\label{tab:cifar10-results}
\setlength{\tabcolsep}{8pt}
\begin{tabular}{lc}
\toprule
Method & Test accuracy \\
\midrule
GINE (task only) & $57.55 \pm 0.78$ \\
R-PEARL & $\boldsymbol{61.00 \pm 0.34}$ \\
DIAL & $59.35 \pm 0.53$ \\
\bottomrule
\end{tabular}
\end{minipage}
\end{center}

\subsection{DrugOOD}

Table~\ref{tab:drugood-results} reports in-distribution (ID) and out-of-distribution (OOD) AUROC for DIAL (\(K=4\)) alongside previously reported baselines. DIAL is close to the strongest baseline on the assay split, but performs slightly worse on the scaffold and size splits.

\begin{table}[t]
    \centering
    \caption{
    DrugOOD binding-affinity prediction results (AUROC, \%).
    Baseline results are taken from \citet{kanatsoulis2025learning} and reported as
    mean $\pm$ standard deviation over five seeds.
    Best point estimates for each domain are bolded.
    }
    \label{tab:drugood-results}
    \setlength{\tabcolsep}{3pt}
    \renewcommand{\arraystretch}{0.95}
    \resizebox{\linewidth}{!}{%
    \begin{tabular}{llcccc}
    \toprule
    & & \multicolumn{2}{c}{ID} & \multicolumn{2}{c}{OOD} \\
    \cmidrule(lr){3-4}\cmidrule(lr){5-6}
    Domain & Method & Val & Test & Val & Test \\
    \midrule
    
    \multirow{9}{*}{\textbf{Assay}}
    & No PE
    & $\boldsymbol{92.92 \pm 0.14}$
    & $92.89 \pm 0.14$
    & $71.02 \pm 0.79$
    & $71.68 \pm 1.10$ \\
    & PEG
    & $92.51 \pm 0.17$
    & $92.57 \pm 0.22$
    & $70.86 \pm 0.44$
    & $71.98 \pm 0.65$ \\
    & SignNet
    & $92.26 \pm 0.21$
    & $92.43 \pm 0.27$
    & $70.16 \pm 0.56$
    & $72.27 \pm 0.97$ \\
    & BasisNet
    & $88.96 \pm 1.35$
    & $89.42 \pm 1.18$
    & $71.19 \pm 0.72$
    & $71.66 \pm 0.05$ \\
    & SPE
    & $92.84 \pm 0.20$
    & $\boldsymbol{92.94 \pm 0.15}$
    & $71.26 \pm 0.62$
    & $\boldsymbol{72.53 \pm 0.66}$ \\
    & SPE-8S
    & $92.36 \pm 0.18$
    & $92.62 \pm 0.10$
    & $70.71 \pm 0.47$
    & $71.72 \pm 0.71$ \\
    & R-PEARL
    & $92.71 \pm 0.10$
    & $92.92 \pm 0.12$
    & $70.57 \pm 0.72$
    & $72.24 \pm 0.30$ \\
    & B-PEARL
    & $90.54 \pm 0.89$
    & $90.81 \pm 0.79$
    & $70.53 \pm 0.67$
    & $71.22 \pm 0.42$ \\
    \cmidrule(lr){2-6}
    & DIAL ($K=4$)
    & $92.54\pm 0.09$ & $92.64\pm 0.15$ & $\boldsymbol{71.33\pm 0.64}$ & $72.03\pm 0.49$ \\
    
    \midrule
    
    \multirow{9}{*}{\textbf{Scaffold}}
    & No PE
    & $\boldsymbol{96.56 \pm 0.10}$
    & $87.95 \pm 0.20$
    & $79.07 \pm 0.97$
    & $68.00 \pm 0.60$ \\
    & PEG
    & $95.65 \pm 0.29$
    & $86.20 \pm 0.14$
    & $79.17 \pm 0.97$
    & $69.15 \pm 0.75$ \\
    & SignNet
    & $95.48 \pm 0.34$
    & $86.73 \pm 0.56$
    & $77.81 \pm 0.70$
    & $66.43 \pm 1.06$ \\
    & BasisNet
    & $85.80 \pm 3.75$
    & $78.44 \pm 2.45$
    & $73.36 \pm 1.44$
    & $66.32 \pm 5.68$ \\
    & SPE
    & $96.32 \pm 0.28$
    & $\boldsymbol{88.12 \pm 0.41}$
    & $\boldsymbol{80.03 \pm 0.58}$
    & $\boldsymbol{69.64 \pm 0.49}$ \\
    & SPE-8S
    & $96.44 \pm 0.079$
    & $87.88 \pm 0.45$
    & $79.34 \pm 0.50$
    & $68.72 \pm 0.63$ \\
    & R-PEARL
    & $96.09 \pm 0.32$
    & $88.01 \pm 0.43$
    & $78.72 \pm 0.02$
    & $69.20 \pm 1.00$ \\
    & B-PEARL
    & $96.06 \pm 0.29$
    & $87.56 \pm 0.81$
    & $79.86 \pm 0.58$
    & $69.51 \pm 0.62$ \\
    \cmidrule(lr){2-6}
    & DIAL ($K=4$)
    & $95.84\pm 0.54$ & $86.99 \pm 1.03$ & $78.69 \pm 0.42$ & $67.89\pm 0.79$ \\
    
    \midrule
    
    \multirow{9}{*}{\textbf{Size}}
    & No PE
    & $\boldsymbol{93.78 \pm 0.12}$
    & $93.60 \pm 0.27$
    & $82.76 \pm 0.04$
    & $66.04 \pm 0.70$ \\
    & PEG
    & $92.46 \pm 0.35$
    & $92.67 \pm 0.23$
    & $82.12 \pm 0.49$
    & $66.01 \pm 0.10$ \\
    & SignNet
    & $93.30 \pm 0.43$
    & $93.20 \pm 0.39$
    & $80.67 \pm 0.50$
    & $64.03 \pm 0.70$ \\
    & BasisNet
    & $86.04 \pm 4.01$
    & $85.51 \pm 4.04$
    & $75.97 \pm 1.71$
    & $60.79 \pm 3.19$ \\
    & SPE
    & $92.46 \pm 0.35$
    & $92.67 \pm 0.23$
    & $82.12 \pm 0.49$
    & $66.02 \pm 0.49$ \\
    & SPE-8S
    & $93.68 \pm 0.20$
    & $93.86 \pm 0.12$
    & $83.04 \pm 0.63$
    & $65.74 \pm 2.20$ \\
    & R-PEARL
    & $93.32 \pm 0.34$
    & $\boldsymbol{93.92 \pm 0.20}$
    & $82.09 \pm 0.44$
    & $65.89 \pm 1.30$ \\
    & B-PEARL
    & $93.18 \pm 0.45$
    & $93.29 \pm 0.46$
    & $\boldsymbol{83.14 \pm 0.37}$
    & $\boldsymbol{66.58 \pm 0.67}$ \\
    \cmidrule(lr){2-6}
    & DIAL ($K=4$)
    & $93.46\pm0.17$ & $93.72\pm 0.25$ & $82.07\pm 0.34$ & $65.71\pm 0.23$ \\
    
    \bottomrule
    \end{tabular}%
    }
    \end{table}

\end{document}
% Processed with alc-ng (0.2.0) (https://github.com/COMSYS/ALC-NG)